\documentclass{article}
\usepackage{enumitem}
\usepackage{graphicx}
\usepackage{xcolor}
\usepackage{tikz}
\usepackage{hyperref}
\usepackage{amsmath, amssymb, amsthm}
\usepackage{multirow}
\usepackage{caption}
\captionsetup[figure]{labelfont=bf}
\captionsetup[table]{labelfont=bf}
\usepackage{float} 
\usepackage{booktabs}
\usepackage{algorithm}
\usepackage{algpseudocode}
\usepackage{amsmath}
\usepackage{setspace}
\usepackage{dsfont}

\usepackage[style=numeric,sorting=none,defernums=false]{biblatex}
\addbibresource{ref.bib} 
\bibliography{ref}   

\usepackage{titling}            
\setlength{\droptitle}{-12.0em}  


\usepackage{bm}
\newtheorem{theorem}{Theorem}[section]
\newtheorem{lemma}{Lemma}[section]
\theoremstyle{definition}
\newtheorem{definition}{Definition}[section]
\newtheorem{corollary}{Corollary}[section]
\newtheorem{remark}{Remark}[section]
\usepackage{titling}   
\usepackage{ragged2e}   


\usepackage{xcolor}
\setlength\fboxsep{1pt}   
\setlength\fboxrule{0.6pt}
\definecolor{mygreen}{RGB}{0,128,0}


\begin{document}

\title{Chem-NMF: Multi-layer $\alpha$-divergence Non-Negative Matrix Factorization for Cardiorespiratory Disease Clustering, with Improved Convergence Inspired by Chemical Catalysts and Rigorous Asymptotic Analysis}
\author{%
\normalsize
\textbf{Yasaman Torabi}\textsuperscript{1,*} \quad
\textbf{Shahram Shirani}\textsuperscript{1,2} \quad
\textbf{James P.\ Reilly}\textsuperscript{1}
}

\date{%
\footnotesize
\textsuperscript{1}Electrical and Computer Engineering Department, McMaster University, Hamilton, Ontario, Canada\\
\textsuperscript{2}L.R. Wilson/Bell Canada Chair in Data Communications, Hamilton, Ontario, Canada
\textsuperscript{*}Corresponding author: torabiy@mcmaster.ca
}

\maketitle

\begin{abstract}
Non-Negative Matrix Factorization (NMF) is an unsupervised learning method offering low-rank representations across various domains such as audio processing, biomedical signal analysis, and image recognition. The incorporation of $\alpha$-divergence in NMF formulations enhances flexibility in optimization, yet extending these methods to multi-layer architectures presents challenges in ensuring convergence. To address this, we introduce a novel approach inspired by the Boltzmann probability of the energy barriers in chemical reactions to theoretically perform convergence analysis. We introduce a novel method, called Chem-NMF, with a bounding factor which stabilizes convergence. To our knowledge, this is the first study to apply a physical chemistry perspective to rigorously analyze the convergence behaviour of the NMF algorithm. We start from mathematically proven asymptotic convergence results and then show how they apply to real data. Experimental results demonstrate that the proposed algorithm improves clustering accuracy by $5.6\% \;\pm\; 2.7\%$ on biomedical signals and 11.1\% $\pm$ 7.2\% on face images (mean ± std).
\end{abstract}

\noindent \textbf{Index Terms—} Nonnegative matrix factorization, convergence, optimization, physical chemistry, clustering, Boltzmann probability, image recognition, biomedical signal processing, cardiorespiratory disease, heart sound, lung sound, NMF.

\section{Introduction}
Data clustering plays a critical role in computer vision and pattern recognition, as it enables unsupervised organization of large-scale datasets into meaningful groups. Recent clustering methods, such as graph-based learning \cite{10706657}, subspace clustering \cite{MIAO2025346}, and deep learning approaches \cite{3670408}, have achieved remarkable progress, but they often suffer from high computational complexity, sensitivity to noise, or lack of interpretability \cite{Wan2025}. Multi-view clustering methods have been proposed to capture complementary information from different feature spaces, yet they typically require post-processing and may fail to fully exploit intrinsic spatial structures \cite{MOUJAHID2025234}. In this context, Non-negative Matrix Factorization (NMF) is an interpretable representation learning tool for data clustering, which is able to generate low-dimensional features \cite{Barkhoda2025}. NMF is a widely used technique for decomposing high-dimensional data into interpretable low-rank components \cite{Cichocki2009}. It has found applications in various fields such as audio processing, biomedical signal analysis, image recognition, text mining, and blind source separation, making it a valuable tool for extracting meaningful patterns from complex datasets \cite{Torabi2023}. Numerous NMF variants, such as graph-regularized NMF \cite{LI2025111679}, locality-preserving NMF \cite{IMANI202510967457}, and robust distributionally-regularized NMF \cite{GILLIS20224052}, have been developed to improve clustering robustness under noisy or high-dimensional conditions. More recent advances include encoder-decoder NMF with $\beta$-divergence, which integrates autoencoder structures for enhanced cluster separability \cite{Soleymanbaigi2025}, and multi-view tensor decomposition methods that unify representation learning with clustering indicators \cite{Wang2025}.  

Among divergence-based NMF approaches, the $\alpha$-divergence formulation provides a flexible framework that generalizes traditional cost functions and enhances model adaptability in different applications \cite{Yang2012}, \cite{Cichocki2008}. However, extending these formulations to multi-layer architectures introduces additional mathematical complexities, requiring a deeper understanding of their theoretical properties, such as convergence \cite{Cichocki2010}. Several studies have investigated the convergence properties of NMF algorithms, often focusing on different divergence measures and optimization techniques. Gillis and Glineur \cite{Gillis2012} analyzed the convergence of standard NMF with multiplicative updates, proving local convergence under specific conditions but not guaranteeing global optimality. Similarly, Fevotte and Idier \cite{tst} explored Itakura-Saito divergence-based NMF for audio signal decomposition, demonstrating practical convergence. Meanwhile, Zhang et al. \cite{Zhang2019} proposed convergence acceleration techniques for NMF. While these works provide valuable insights, they primarily focus on single-layer architectures, leaving the convergence behaviour of multi-layer NMF largely unexplored. Multi-layer models introduce additional non-linearity and dependencies between layers, making their convergence more challenging to analyze. To address these challenges, our work draws inspiration from physical chemistry, particularly the concepts of energy barriers and Boltzmann probability, to provide a new perspective on the convergence of multi-layer $\alpha$-divergence NMF. Energy barriers represent the obstacles that a system must overcome to transition from one stable state to another \cite{PIETRUCCI201732}. The concept of energy barriers is widely observed in natural phenomena where systems must overcome thresholds to transition between states. For example, in physical systems, this behaviour is analogous to free energy functions in thermodynamics, where different configurations yield varying energy levels that influence system stability \cite{Margrave1955}. Similarly, this approach aligns with the concept of activation energy barriers in chemical reactions, where molecules must overcome specific energy thresholds to proceed, as described by the Arrhenius equation \cite{Laidler1984}. Another common example is chemical reactions, where reactants must surpass an activation energy barrier before transforming into a product \cite{reaction}. Similarly, in machine learning, optimization landscapes often contain local minima, and an algorithm’s ability to escape suboptimal states is crucial for achieving global convergence. For example, in stochastic optimization, simulated annealing mimics the annealing process in metallurgy by starting at a high temperature, allowing for broad exploration, and gradually cooling to settle into an optimal configuration \cite{Kirkpatrick1983}. If the system cools too quickly, it risks becoming trapped in local minima; however, by appropriately tuning the $\alpha$ parameter in $\alpha$-divergence, one can control this cooling process and reduce the likelihood of suboptimal convergence. Furthermore, this optimization strategy parallels quantum tunnelling phenomena, where particles overcome classical barriers. In quantum annealing and quantum Boltzmann machines (QBM), quantum fluctuations facilitate the escape from local minima, a behaviour similar to $\alpha$-divergence-based optimization by adjusting how errors influence learning \cite{Amin2018}. 

Recent studies have applied energy barrier analysis to machine learning convergence, such as in deep neural networks \cite{Chaudhari2019} and energy-based models \cite{Hinton2012}, showing that overcoming energy barriers can accelerate convergence \cite{An2023AdsorbML}. However, to our knowledge, this is the first study to apply an energy-based perspective to analyze the convergence behaviour of multi-layer $\alpha$-divergence NMF. By modelling the optimization process as a system navigating an energy landscape, we introduce an analogy where Boltzmann probability governs the likelihood of escaping local minima, thereby improving the robustness of convergence. Our approach provides a new theoretical foundation for understanding convergence in hierarchical NMF models, overcoming the limitations of previous single-layer NMF studies. By incorporating energy barrier modelling, we design an NMF framework that balances escaping poor local minima (exploration) and converging to meaningful solutions (exploitation). Our proposed Chem-NMF improves optimization compared to plain $\alpha$-NMF, and demonstrate its effectiveness in data clustering.

\section {Methodology}

\subsection{Clinical Background}

In this work, we perform clustering on heart and lung sounds as well as image recognition tasks. To better interpret the extracted features, it is important to consider their medical context. Recent developments in clinical Internet of Things (IoT) systems have enabled precise monitoring of cardiac and respiratory cycles \cite{baraeinejad2023clinical}, \cite{t2024E}. The cardiac cycle consists of systole (contraction) and diastole (relaxation), which is regulated by heart valves to ensure one-way blood flow. Normal sounds include S1 and S2 from valve closure, while extra sounds S3 and S4 arise in early and late diastole and may signal dysfunction, such as coronary artery disease \cite{ash2}. Murmurs are additional noises from turbulent blood flow, often divided into systolic or diastolic types \cite{t2024e}. Meanwhile, the respiratory cycle alternates between inspiration and expiration, driven by the diaphragm and chest muscles. Normal breathing produces smooth sounds, while adventitious lung sounds mark abnormalities such as pneumonia \cite{ash}: crackles are brief popping noises from sudden airway opening, wheezes are continuous high-pitched tones from narrowed passages, rhonchi are low, snoring-like sounds, and pleural rubs are rough noises from inflamed membranes \cite{Torabi2024MEMS}.
\subsection{Theoretical Background}

The standard NMF problem seeks to approximate a data matrix 
\(\mathbf{Y} \in \mathbb{R}_+^{I \times T}\) 
with two matrices 
\(\mathbf{A} \in \mathbb{R}_+^{I \times J}\) 
and \(\mathbf{X} \in \mathbb{R}_+^{J \times T}\) 
such that:

\begin{equation}
    \mathbf{Y} = \mathbf{A}\mathbf{X} + \mathbf{E},
\end{equation}

where \(\mathbf{E} \in \mathbb{R}^{I \times T}\) represents the approximation error, 
\(\mathbf{A}\) denotes the basis matrix (i.e. feature set), and \(\mathbf{X}\) corresponds to the activation map (i.e. importance of each feature). 
All matrices are nonnegative. In NMF, the objective is to minimize the error 
\(\mathbf{E}\) between the original data \(\mathbf{Y}\) and the reconstructed data 
\(\mathbf{A}\mathbf{X}\). Unlike closed-form solutions, an iterative update rule approach 
defines a cost function to measure the difference between these two terms and aims 
to minimize it. The choice of cost function leads to various NMF algorithms; the 
specific variant we focus on utilizes the $\alpha$-divergence, known as $\alpha$-NMF. In multi-layer NMF, the basic mixing matrix \(\mathbf{A}\) is replaced by a set of cascaded matrices. 
It follows an iterative decomposition process. First, we approximate 
\(\mathbf{Y} \approx \mathbf{A}^{(1)}\mathbf{X}^{(1)}\). 
Next, the output \(\mathbf{X}^{(1)}\) serves as the new input, decomposed as 
\(\mathbf{X}^{(1)} \approx \mathbf{A}^{(2)}\mathbf{X}^{(2)}\). 
This process continues, considering only the latest components until a stopping criterion is met. 
The final model is:

\begin{equation}
    \mathbf{Y} \approx \mathbf{A}^{(1)} \mathbf{A}^{(2)} \dots \mathbf{A}^{(L)} \mathbf{X}^{(L)},
\end{equation}

where 

\begin{equation}
    \mathbf{A} = \mathbf{A}^{(1)} \mathbf{A}^{(2)} \dots \mathbf{A}^{(L)}, 
    \quad 
    \mathbf{X} = \mathbf{X}^{(L)}.
\end{equation}

\subsection{Physical Chemistry Background}

In order to motivate the analogy between chemical reactions and the convergence of 
multi-layer $\alpha$-NMF, we review several basic chemical concepts \cite{Atkins2022}. In chemical reactions, the initial molecules that change are called \textit{reactants}, while the final stable molecules formed after completion are referred to as \textit{products}. The driving force behind these transformations is the \textit{Gibbs free energy}, which combines a system’s enthalpy $H$ and entropy $S$ at temperature $T$. At constant temperature and pressure, the direction of a reaction is determined by the change in free energy $\Delta G$. A negative $\Delta G$ indicates a spontaneous reaction, while a positive $\Delta G$ requires external energy:
\begin{equation}
\Delta G = \Delta H - T \Delta S .
\end{equation}

Many reactions proceed in \textit{multi-stage reactions}, each with its own transition state and energy barrier (Fig.~\ref{fig:reaction}a). A free energy diagram shows reactants moving through several intermediates before reaching a stable product state. Each stage resembles an energy basin separated by barriers. The \textit{transition state} itself is a high-energy, unstable configuration that represents the maximum energy barrier between reactants and products. Between two such barriers, a temporary species known as an \textit{intermediate} can form. The energy needed to cross the transition state is called the \textit{activation energy}. The Gibbs free energy difference between the reactants and the TS defines the activation barrier, which controls the reaction rate:
\begin{equation}
\Delta G^{\ddagger} = G_{\text{TS}} - G_{\text{reactants}}.
\end{equation}
Catalysts lower $\Delta G^{\ddagger}$ by stabilizing the transition state (Fig.~\ref{fig:reaction}b). In catalyzed reactions, the pathway is rerouted to reduce the activation barrier (e.g. see the catalytic hydrogenation of alkenes in the Supplementary Material).
The likelihood of crossing these barriers is governed by the \textit{Boltzmann distribution}, which describes the probability of a system occupying a state with energy $E$ and thereby determines how easily the system can overcome energy barriers to reach more stable states. 
\begin{figure}[H]
    \centering
    \includegraphics[width=\textwidth]{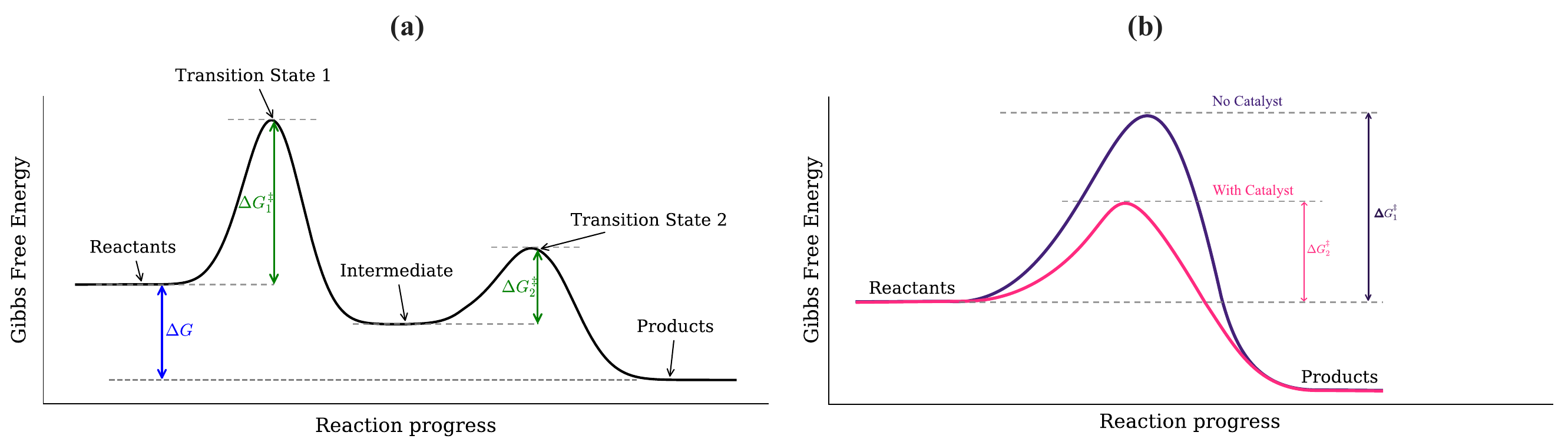}
    \captionsetup{font=footnotesize}
    \caption{Energy profile of a reaction progress: \textbf{(a)} reactants, intermediate, and products. 
    The two transition states (TS1 and TS2) correspond to the energy maxima, with activation 
    free energies $\Delta G^{\ddagger}_{1}$ and $\Delta G^{\ddagger}_{2}$ indicated by vertical 
    arrows. The overall free energy change $\Delta G$ is shown between reactants and products. \textbf{(b)} catalyst effect on lowering the activation barrier.}
    \label{fig:reaction}
\end{figure}
The above chemical phenomena provide a natural analogy to the optimization process of the multi-layer $\alpha$-NMF algorithm (Table~\ref{tab:chem}). In this analogy, chemical reaction pathways and their free-energy landscapes are mapped to the cost-function landscape of multi-layer optimization. Each successive stage represents either a local or global descent step, similar to intermediates in multistage reactions. Thus, just as chemical systems move through intermediates before reaching the most stable state, multi-layer $\alpha$-NMF traverses successive layers to escape shallow minima and converge to better solutions.

\begin{table}[H]
\footnotesize
\centering
\captionsetup{font=footnotesize}
\caption{Analogy between chemical reactions and multi-layer $\alpha$-NMF optimization.}
\label{tab:chem}
\begin{tabular}{p{0.45\linewidth} p{0.45\linewidth}}
\toprule
\textbf{Chemistry Concept} & \textbf{Algorithm Concept} \\
\midrule
Reactants & Input data \\
Transition state & Initial value \\
Intermediate & Local minima in hidden layers \\
Products & Low-rank outputs \\
Gibbs free energy & Optimization cost function \\
Free energy minimum (stable product) & Global minimum of the cost function \\
Boltzmann probability & Escape probability from poor minima \\
Multistage decomposition pathway & Multi-layer factorization trajectory \\
Catalyst lowering barrier & Bounding factor stabilizing convergence \\
\bottomrule
\end{tabular}
\end{table}

The novelty of Chem-NMF lies in the introduction of a bounding factor inspired by catalysts in chemical reactions. Just as catalysts reduce activation barriers and regulate the reaction rate (Fig.~\ref{fig:reaction}b), the bounding factor controls the initialization at the start of each layer and stabilizes the algorithm's convergence. 

\subsection{Proposed Method}
Figure~\ref{fig:method} shows an overview of the procedure. We first factorize the input data into a low-rank basis and an activation map using NMF, and perform clustering with $k$-means on the activation maps. Then, we reconstruct clustered activation maps by multiplying the feature basis matrices. Finally, we evaluate the clustering results using accuracy (ACC) and normalized mutual information (NMI). 

\begin{figure}[H]
    \centering
    \includegraphics[width=\textwidth]{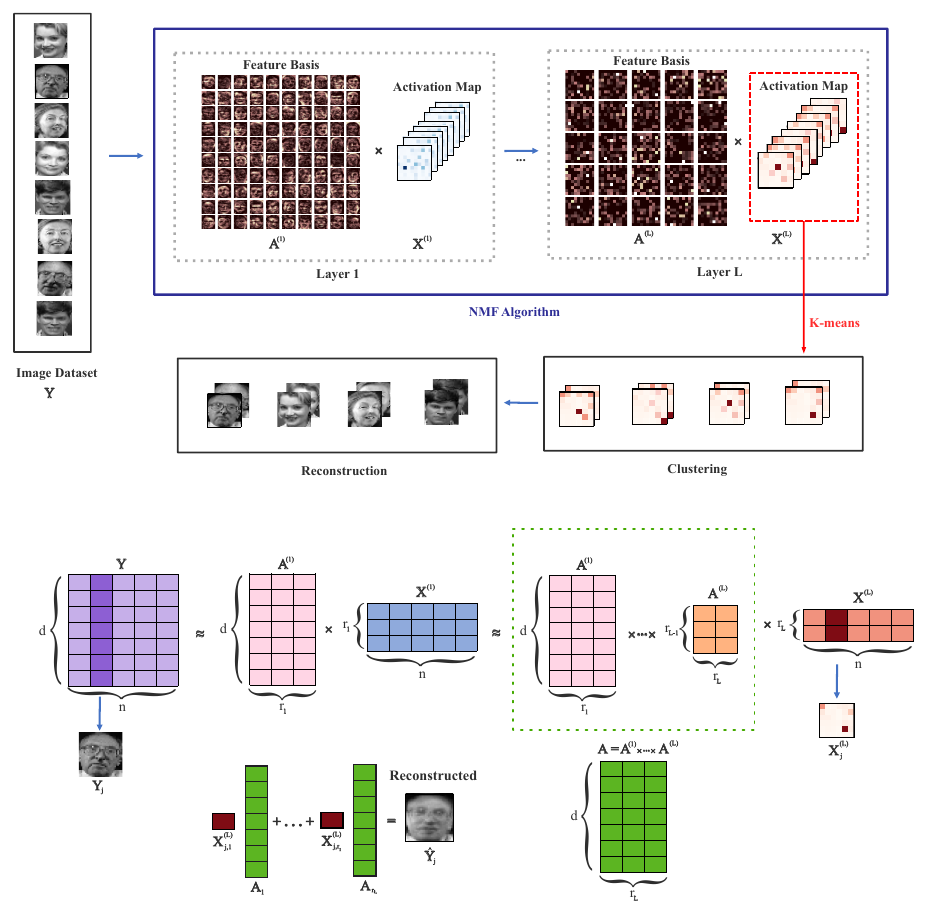}
    \captionsetup{font=footnotesize}
    \caption{Overview of the clustering procedure. 
    The input dataset $\mathbf{Y}$ is factorized into a feature basis $\mathbf{A}$ and activation maps $\mathbf{X}$ across multiple layers using NMF. 
    The activation maps are clustered with $k$-means, and images are reconstructed using feature basis matrices.}
    \label{fig:method}
\end{figure}

Algorithm~\ref{alg:chemnmf} illustrates the proposed algorithm. Chem-NMF is a multi-layer $\alpha$-divergence NMF algorithm that introduces a bounding factor ($BF$) as a novel mechanism to improve convergence. Inspired by the way catalysts reduce activation energy in chemical reactions, the bounding factor is applied during the random initialization step to stabilize the search space in order to reduce the risk of overshooting and getting trapped in local minima.

\begin{algorithm} [H]
\caption{Chem-NMF}
\label{alg:chemnmf}
\begin{spacing}{1}
\begin{algorithmic}[1]
\Require Input data $\mathbf{Y}\in\mathbb{R}_+^{I\times T}$, layer rank $\mathbf{R}=[R_1,..,R_L]$, $\alpha$, $bf$
\Ensure Output activation map $\mathbf{X}^{(L)} \in \mathbb{R}^{R_L \times T}$, basis features $\mathbf{A}_{tot} \in \mathbb{R}^{I \times R_L}$

\State $\mathbf{Y}^{(0)} = \mathbf{Y}$; \quad $\mathbf{A}_{tot} = \mathbf{I}_I$
\For{$\ell=1$ \textbf{to} $L$}
    \State \textbf{Initialization:}
    \If{$\ell=1$}
        \State Random $\mathbf{A}^{(1)} \in \mathbb{R}_+^{I\times R_1}$, $\mathbf{X}^{(1)} \in \mathbb{R}_+^{R_1\times T}$
    \Else
        \State Random $\mathbf{X}^{(\ell)} \in \mathbb{R}_+^{R_\ell\times T}$
        \State Random $\mathbf{A}_{rand}\in \mathbb{R}_+^{R_{\ell-1}\times R_\ell}$
        \State $\mathbf{A}_{base} = \mathrm{mean}(\mathbf{A}^{(\ell-1)}) \cdot \mathds{1}_{R_{\ell-1}\times R_\ell}$
        \State $\mathbf{A}^{(\ell)} = (1-bf)\mathbf{A}_{rand} + bf\mathbf{A}_{base}$
    \EndIf
    \Repeat
       \State $\widehat{\mathbf{Y}}^{(\ell-1)} \gets \mathbf{A}^{(\ell)}\mathbf{X}^{(\ell)}$

    \State $\widehat{\mathbf{Y}}^{(\ell-1)} \gets \mathbf{A}^{(\ell)} \mathbf{X}^{(\ell)}$

    \State $\mathbf{X}^{(\ell)} \gets \mathbf{X}^{(\ell)} \odot
        \left(
          \frac{ (\mathbf{A}^{(\ell)})^\top 
                 \left( \mathbf{Y}^{(\ell-1)} \oslash \widehat{\mathbf{Y}}^{(\ell-1)} \right)^{\alpha} }
               { (\mathbf{A}^{(\ell)})^\top \mathds{1}_I \, \mathds{1}_T^\top }
        \right)^{1/\alpha}$
    
    \State $\mathbf{A}^{(\ell)} \gets \mathbf{A}^{(\ell)} \odot
        \left(
          \frac{ \left( \mathbf{Y}^{(\ell-1)} \oslash \widehat{\mathbf{Y}}^{(\ell-1)} \right)^{\alpha}
                 (\mathbf{X}^{(\ell)})^\top }
               { \mathds{1}_I \, (\mathbf{X}^{(\ell)} \mathds{1}_T)^\top }
        \right)^{1/\alpha}$

        \State Normalize $\mathbf{A}^{(\ell)}$, $\mathbf{X}^{(\ell)}$
    \Until a stopping criterion is met
    \State $\mathbf{A}_{tot} \gets \begin{cases}
        \mathbf{A}^{(1)}, & \ell=1 \\
        \mathbf{A}_{tot}\mathbf{A}^{(\ell)}, & \ell>1
    \end{cases}$
    \State $\mathbf{Y}^{(\ell)} \gets \mathbf{X}^{(\ell)}$
\EndFor
\State \Return $\mathbf{A}_{tot}, \mathbf{X}^{(L)}, \{\mathbf{A}^{(\ell)}\}, \{\mathbf{X}^{(\ell)}\}$
\end{algorithmic}
\end{spacing}
\end{algorithm}

\section{Rigorous Convergence Analysis}
In this section, we mathematically prove that Chem-NMF reduces the probability of converging to local minima. First, we perform convergence analysis for the single-layer case, and then we proceed to the multilayer case.

Let $D_{\alpha}(\mathbf{Y} \parallel \mathbf{A}\mathbf{X})$ denote the objective function based on the $\alpha$-divergence between $\mathbf{Y}$ and $\mathbf{A}\mathbf{X}$ defined as ~(\ref{eq:div}). We show that the algorithm converges subject to its multiplicative update rule \cite{Cichocki2009}.
\begin{equation}
D_{\alpha}(\mathbf{Y} \parallel \mathbf{A}\mathbf{X}) = \frac{1}{\alpha(\alpha - 1)} 
\sum_{it} \left( y_{it}^\alpha [\mathbf{A}\mathbf{X}]_{it}^{1 - \alpha} - \alpha y_{it} + (\alpha - 1) [\mathbf{A}\mathbf{X}]_{it} \right).
\label{eq:div}
\end{equation}

\begin{theorem} 
The NMF algorithm follows the multiplicative update rules: 

\begin{equation}
    x_{jt} \leftarrow x_{jt} 
    \left( \frac{\sum\limits_{i} a_{ij} \left( \frac{y_{it}}{[\mathbf{A}\mathbf{X}]_{it}} \right)^{\alpha}}
    {\sum\limits_{i} a_{ij}} \right)^{\frac{1}{\alpha}}.
\end{equation}

\begin{equation}
    a_{ij} \leftarrow a_{ij} 
    \left( \frac{\sum\limits_{t} x_{jt} \left( \frac{y_{it}}{[\mathbf{A}\mathbf{X}]_{it}} \right)^{\alpha}}
    {\sum\limits_{t} x_{jt}} \right)^{\frac{1}{\alpha}}.
\end{equation}
\end{theorem}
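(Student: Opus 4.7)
The plan is to derive both update rules via the majorization--minimization (MM) framework, the standard tool for $\alpha$-divergence NMF. I would construct an auxiliary function $G(\mathbf{X},\widetilde{\mathbf{X}})$ satisfying (i) $G(\widetilde{\mathbf{X}},\widetilde{\mathbf{X}})=D_{\alpha}(\mathbf{Y}\parallel\mathbf{A}\mathbf{X})$, (ii) $G(\mathbf{X},\widetilde{\mathbf{X}})\geq D_{\alpha}(\mathbf{Y}\parallel\mathbf{A}\mathbf{X})$ for all admissible $\mathbf{X}$, and (iii) separability of $G$ across the entries $x_{jt}$. The multiplicative update then falls out as the unique stationary point of $G$, and monotone decrease of $D_{\alpha}$ follows from the sandwich $D_{\alpha}(\mathbf{X}^{\text{new}})\leq G(\mathbf{X}^{\text{new}},\widetilde{\mathbf{X}})\leq G(\widetilde{\mathbf{X}},\widetilde{\mathbf{X}})=D_{\alpha}(\widetilde{\mathbf{X}})$. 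The update for $\mathbf{A}$ then follows by symmetry, applying the same argument to the transposed factorization $\mathbf{Y}^{\top}\approx \mathbf{X}^{\top}\mathbf{A}^{\top}$.

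For the construction of $G$, I would introduce the convex-combination weights $\lambda_{ijt}=a_{ij}\widetilde{x}_{jt}/[\mathbf{A}\widetilde{\mathbf{X}}]_{it}$, which satisfy $\lambda_{ijt}\geq 0$ and $\sum_{j}\lambda_{ijt}=1$, and rewrite $[\mathbf{A}\mathbf{X}]_{it}=\sum_{j}\lambda_{ijt}\,(a_{ij}x_{jt}/\lambda_{ijt})$. Applied to the nonlinear power term $[\mathbf{A}\mathbf{X}]_{it}^{1-\alpha}$ in~(\ref{eq:div}), Jensen's inequality for $u\mapsto u^{1-\alpha}$, combined with the sign of the prefactor $1/[\alpha(\alpha-1)]$, produces a separable upper bound of the form $\sum_{j}\lambda_{ijt}\,(a_{ij}x_{jt}/\lambda_{ijt})^{1-\alpha}$. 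The linear contribution $(\alpha-1)[\mathbf{A}\mathbf{X}]_{it}$ is already separable in $x_{jt}$, and the $\alpha y_{it}$ term is independent of $\mathbf{X}$. Summing over $i,t$ yields $G(\mathbf{X},\widetilde{\mathbf{X}})$ as a sum of single-variable functions of the form $c_{1}x_{jt}^{1-\alpha}+c_{2}x_{jt}+\text{const}$, whose coefficients depend only on the current iterate, $\mathbf{A}$, and $\mathbf{Y}$.

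Setting $\partial G/\partial x_{jt}=0$ reduces to a one-variable equation $x_{jt}^{-\alpha}\cdot(\text{numerator})=(\text{denominator})$ in which the $\lambda_{ijt}^{\alpha}$ factors recombine with $a_{ij}^{1-\alpha}\widetilde{x}_{jt}^{\alpha}$ to give $a_{ij}\widetilde{x}_{jt}^{\alpha}/[\mathbf{A}\widetilde{\mathbf{X}}]_{it}^{\alpha}$. Solving yields exactly $x_{jt}\leftarrow \widetilde{x}_{jt}\bigl(\sum_{i}a_{ij}(y_{it}/[\mathbf{A}\widetilde{\mathbf{X}}]_{it})^{\alpha}/\sum_{i}a_{ij}\bigr)^{1/\alpha}$, the first claimed rule. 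The derivation for $a_{ij}$ is identical after interchanging the roles of $\mathbf{A}$ and $\mathbf{X}^{\top}$.

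The hard part will be keeping Jensen's inequality pointing the right way across the three regimes of $\alpha$: for $\alpha>1$ and $\alpha<0$ the function $u^{1-\alpha}$ is convex, while for $0<\alpha<1$ it is concave; and the sign of $1/[\alpha(\alpha-1)]$ also flips across these regimes. One must verify that the combined sign consistently produces an \emph{upper} bound on $D_{\alpha}$ in every case. A uniform treatment via the function $\phi_{\alpha}(u)=(u^{1-\alpha}-1)/[\alpha(\alpha-1)]$, which is convex for every admissible $\alpha\neq 0,1$, avoids case-splitting and makes the Jensen step a single inequality. The degenerate limits $\alpha\to 0$ and $\alpha\to 1$ (KL and reverse-KL divergences) can then be recovered by l'H\^{o}pital's rule, or circumvented entirely by working with $\phi_{\alpha}$ throughout.
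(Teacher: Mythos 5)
Your derivation is correct, but it is not the route the paper takes for this particular theorem. The paper's Appendix~A proof is a transformed-gradient-descent argument: it differentiates $D_{\alpha}$ with respect to $x_{jt}$, changes variables via $\Phi(x)=x^{\alpha}$, and chooses the specific learning rate $\eta_{jt}=\alpha^{2}\Phi(x_{jt})/\bigl(x_{jt}^{1-\alpha}\sum_{i}a_{ij}\bigr)$ so that the additive gradient step collapses algebraically into the multiplicative rule; the update for $a_{ij}$ is then obtained the same way. Your majorization--minimization construction — convex-combination weights $\lambda_{ijt}=a_{ij}\widetilde{x}_{jt}/[\mathbf{A}\widetilde{\mathbf{X}}]_{it}$, Jensen's inequality applied to a single convex surrogate valid for all $\alpha\neq 0,1$, and the update read off as the stationary point of the separable majorizer — is essentially the machinery the paper develops separately as Lemma~3.1 and Theorem~3.2 (Appendices~B and~C), where the same weights appear as $\zeta_{itj}$ and the same one-variable stationarity equation $\sum_{i}a_{ij}\bigl[1-(a_{ij}x_{jt}/(y_{it}\zeta_{itj}))^{-\alpha}\bigr]=0$ is solved to recover the rule. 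Your observation that the combined function $f(z)=\tfrac{1}{\alpha(\alpha-1)}\bigl[z^{1-\alpha}+(\alpha-1)z-\alpha\bigr]$ has $f''(z)=z^{-\alpha-1}\geq 0$ for every admissible $\alpha$, so no case-splitting on the sign of $1/[\alpha(\alpha-1)]$ is needed, is exactly how the paper disposes of the same concern. The trade-off: the paper's gradient-rescaling derivation of Theorem~3.1 is shorter and motivates the rule as adaptive-step gradient descent, but by itself it does not establish descent of the cost (that is deferred to Theorem~3.2); your MM route delivers the update and the monotone-decrease guarantee $F(\mathbf{X}^{(t+1)})\leq F(\mathbf{X}^{(t)})$ in one stroke, at the price of doing the auxiliary-function work up front. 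Your symmetry argument for the $a_{ij}$ update via $\mathbf{Y}^{\top}\approx\mathbf{X}^{\top}\mathbf{A}^{\top}$ is sound and matches the paper's ``the analysis for $a_{ij}$ is similar'' remark.
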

\textit{Proof} Appendix A.

\begin{definition}[Auxiliary Function]
A function $G(\mathbf{X}, \mathbf{X}')$ is an auxiliary function for $F(\mathbf{X})$ if it satisfies the following conditions:
\begin{enumerate}[label=\roman*]
    \item $G(\mathbf{X}, \mathbf{X}) = F(\mathbf{X})$,
    \item $G(\mathbf{X}, \mathbf{X}') \geq F(\mathbf{X})$, for all $\mathbf{X}'$.
\end{enumerate}
\end{definition}

\begin{lemma}
The function
\begin{equation}
G(\mathbf{X}, \mathbf{X}') = \frac{1}{\alpha(\alpha - 1)} 
\sum_{ijt} y_{it} \zeta_{itj} 
\left[ \left( \frac{a_{ij}x_{jt}} {y_{it} \zeta_{itj}}  \right)^{(1-\alpha)} 
+ (\alpha - 1)  \frac{a_{ij} x_{jt}} {y_{it}\zeta_{itj}}  - \alpha \right],
\end{equation}
where 
\begin{equation}
\zeta_{itj} = \frac{a_{ij} x'_{jt}}{\sum_{j=1}^{J} a_{ij} x'_{jt}},
\label{eq:zeta}
\end{equation}
is an auxiliary function for
\begin{equation}
F(\mathbf{X}) = \frac{1}{\alpha(\alpha - 1)} 
\sum_{it} \left( y_{it}^{\alpha} [\mathbf{A}\mathbf{X}]_{it}^{1-\alpha} - \alpha y_{it} + (\alpha - 1) [\mathbf{A}\mathbf{X}]_{it} \right).
\end{equation}
\end{lemma}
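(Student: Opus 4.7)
The plan is to verify the two defining properties of an auxiliary function by direct algebraic manipulation followed by a single application of Jensen's inequality. Throughout I will use the key fact that, for each fixed $(i,t)$, the weights $\{\zeta_{itj}\}_j$ in~(\ref{eq:zeta}) form a probability distribution over $j$, since $\zeta_{itj}\geq 0$ and $\sum_j \zeta_{itj} = 1$.

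First I would check condition (i). Setting $\mathbf{X}'=\mathbf{X}$ gives $\zeta_{itj} = a_{ij}x_{jt}/[\mathbf{A}\mathbf{X}]_{it}$, so the ratio $a_{ij}x_{jt}/(y_{it}\zeta_{itj})$ collapses to $[\mathbf{A}\mathbf{X}]_{it}/y_{it}$, which no longer depends on $j$. Summing $\zeta_{itj}$ out over $j$ then gives $1$, and each $(i,t)$ summand of $G(\mathbf{X},\mathbf{X})$ reduces to $y_{it}^{\alpha}[\mathbf{A}\mathbf{X}]_{it}^{1-\alpha} + (\alpha-1)[\mathbf{A}\mathbf{X}]_{it} - \alpha y_{it}$, matching $F(\mathbf{X})$ term by term.

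For condition (ii), I would split $G(\mathbf{X},\mathbf{X}')$ into a linear part carrying $(\alpha-1)a_{ij}x_{jt} - \alpha y_{it}\zeta_{itj}$ and a nonlinear part involving the $(1-\alpha)$-power. Summing the linear part over $j$ and using $\sum_j\zeta_{itj}=1$ reproduces $(\alpha-1)[\mathbf{A}\mathbf{X}]_{it} - \alpha y_{it}$, which matches the corresponding piece of $F$ exactly, so these contributions cancel from $G-F$. After this cancellation, the claim $G(\mathbf{X},\mathbf{X}')\geq F(\mathbf{X})$ reduces, per $(i,t)$ index, to
\begin{equation*}
\frac{1}{\alpha(\alpha-1)}\,y_{it}^{\alpha}\!\left[\sum_{j}\zeta_{itj}\!\left(\frac{a_{ij}x_{jt}}{\zeta_{itj}}\right)^{\!1-\alpha} - \left(\sum_{j}\zeta_{itj}\,\frac{a_{ij}x_{jt}}{\zeta_{itj}}\right)^{\!1-\alpha}\right]\;\geq\;0,
\end{equation*}
which is precisely the arrangement needed to apply Jensen's inequality to the map $z\mapsto z^{1-\alpha}$ with weights $\zeta_{itj}$ and evaluations $a_{ij}x_{jt}/\zeta_{itj}$.

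The main obstacle is bookkeeping the sign of the prefactor $1/[\alpha(\alpha-1)]$ jointly with the convexity of $z\mapsto z^{1-\alpha}$. I would treat two regimes: for $\alpha\in(0,1)$ the map is concave and $\alpha(\alpha-1)<0$, so Jensen reverses the bracket but the negative prefactor flips it back; for $\alpha<0$ or $\alpha>1$ the map is convex and $\alpha(\alpha-1)>0$, so both factors are already aligned. In each regime the two sign choices compose to preserve the inequality, giving $G(\mathbf{X},\mathbf{X}')\geq F(\mathbf{X})$. The degenerate cases $\alpha\in\{0,1\}$, where the $\alpha$-divergence reduces to KL or dual KL, would be handled by a limiting argument, taking $\alpha\to 0$ or $\alpha\to 1$ in both $F$ and $G$ and applying l'Hopital to the $1/[\alpha(\alpha-1)]$ factor, thereby recovering the classical KL auxiliary-function construction of Lee and Seung as a special case.
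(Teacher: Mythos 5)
Your proposal is correct and follows essentially the same route as the paper: condition (i) by direct substitution of $\mathbf{X}'=\mathbf{X}$, and condition (ii) by Jensen's inequality applied per index $(i,t)$ with the weights $\zeta_{itj}$. The only difference is bookkeeping: you peel off the affine terms and split into sign regimes for $z\mapsto z^{1-\alpha}$, whereas the paper folds the prefactor and affine part into the single function $f(z)=\tfrac{1}{\alpha(\alpha-1)}\bigl[z^{1-\alpha}+(\alpha-1)z-\alpha\bigr]$, whose second derivative $f''(z)=z^{-\alpha-1}>0$ makes it convex for every $\alpha\notin\{0,1\}$, so one application of Jensen covers all cases without your two-regime analysis.
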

\textit{Proof} Appendix B.

\begin{theorem}
$F(\mathbf{X})$ is non-increasing such that:
\begin{equation}  
    F(\mathbf{X}^{(t+1)}) \leq G(\mathbf{X}^{(t+1)}, \mathbf{X}^{(t)}) 
    \leq G(\mathbf{X}^{(t)}, \mathbf{X}^{(t)}) = F(\mathbf{X}^{(t)}).
    \label{eq:Gfunc}
\end{equation}  
\end{theorem}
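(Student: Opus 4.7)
The chain in (\ref{eq:Gfunc}) splits into three pieces, and the plan is to dispatch the two outer pieces from the auxiliary-function definition and concentrate the real work on the middle inequality. The rightmost equality $G(\mathbf{X}^{(t)},\mathbf{X}^{(t)}) = F(\mathbf{X}^{(t)})$ is exactly property (i) of the Auxiliary Function definition, and the leftmost inequality $F(\mathbf{X}^{(t+1)}) \leq G(\mathbf{X}^{(t+1)},\mathbf{X}^{(t)})$ is property (ii) evaluated at the next iterate with reference $\mathbf{X}^{(t)}$. Since the preceding Lemma has already established that $G$ is an auxiliary function for $F$, both of these follow immediately with no further calculation.

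The substantive step is the middle inequality $G(\mathbf{X}^{(t+1)},\mathbf{X}^{(t)}) \leq G(\mathbf{X}^{(t)},\mathbf{X}^{(t)})$. The approach is to \emph{define} the next iterate as
\begin{equation}
\mathbf{X}^{(t+1)} \in \arg\min_{\mathbf{X}\geq 0}\; G(\mathbf{X},\mathbf{X}^{(t)}),
\end{equation}
so that the desired inequality is automatic: $\mathbf{X}^{(t)}$ is itself a feasible competitor in this minimization. Closing the loop then reduces to verifying that the multiplicative update rule from the preceding Theorem coincides with this minimizer. I would differentiate $G(\mathbf{X},\mathbf{X}^{(t)})$ coordinate-wise in $x_{jt}$, noting that $\zeta_{itj}$ depends only on $\mathbf{X}^{(t)}$ and is constant under the minimization. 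Substituting (\ref{eq:zeta}) collapses the ratio $a_{ij}x_{jt}/(y_{it}\zeta_{itj})$ to $(x_{jt}/x_{jt}^{(t)})\cdot[\mathbf{A}\mathbf{X}^{(t)}]_{it}/y_{it}$; setting $\partial G/\partial x_{jt}=0$ and isolating $x_{jt}$ then yields precisely the update rule stated earlier, and the symmetric calculation with the roles of the two factors swapped yields the companion update for $a_{ij}$.

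The main obstacle I anticipate is confirming that the stationary point obtained in this manner is a \emph{global} minimizer over the nonnegative orthant rather than merely a saddle or a local extremum. This reduces to showing that each summand of $G(\cdot,\mathbf{X}^{(t)})$ is convex in $x_{jt}$, which amounts to checking that the second derivative of $u\mapsto u^{1-\alpha}/[\alpha(\alpha-1)]$ is nonnegative. A short case split on the sign of $\alpha(\alpha-1)$, together with the fact that the second derivative of $u^{1-\alpha}$ equals $(1-\alpha)(-\alpha)u^{-1-\alpha}$ with matching sign, handles all $\alpha\in\mathbb{R}\setminus\{0,1\}$; the limiting cases $\alpha\to 0$ and $\alpha\to 1$ (Itakura--Saito and generalized Kullback--Leibler) follow by applying L'Hôpital's rule to $D_\alpha$ itself and invoking the corresponding known auxiliary functions. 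Once convexity and the KKT stationarity condition are in hand, the chain in (\ref{eq:Gfunc}) closes and monotonic descent of $F$ along the iterates is established.
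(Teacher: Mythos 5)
Your proposal is correct and follows essentially the same route as the paper's Appendix C: the outer inequality and equality come from the auxiliary-function properties, and the middle inequality is obtained by identifying the multiplicative update as the minimizer of $G(\cdot,\mathbf{X}^{(t)})$ via the coordinate-wise stationarity condition, which reproduces the update rule. Your extra step of verifying global minimality through convexity is a welcome bit of added rigor that the paper leaves implicit (it already computes $f''(z)=z^{-\alpha-1}\ge 0$ in the proof of Lemma 3.1, so no case split on the sign of $\alpha(\alpha-1)$ is actually needed).
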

\textit{Proof} Appendix C.

\noindent The convergence analysis of the update rule for $a_{ij}$ is similar. 

 Although we proved that the algorithm has a non-increasing cost function that guarantees convergence, it may still get trapped in local minima due to the non-convexity of the optimization landscape. 
We show that multi-layer NMF with bounded initialization reduces the probability of converging to local minima. 
However, it requires more iterations, leading to slower convergence. 
This behaviour aligns with the exploration–exploitation trade-off, which means balancing 
between exploration (freely searching for the best solution) and exploitation 
(improving the best-known solution). Exploitation speeds up convergence but may get stuck in local minima, 
while exploration reduces this risk by searching widely but slows convergence, 
requiring more iterations.

\begin{definition}[Energy Barrier]
Let $D_{\alpha}(\mathbf{Y} \parallel \mathbf{A}\mathbf{X})$ be the cost function associated with the NMF algorithm. 
The energy barrier $\xi$ is defined as the difference between the highest cost encountered along an 
optimization path $\gamma$ and the cost at the global minimum:
\begin{equation}
\xi = \max_{\gamma} D_{\alpha}(\mathbf{Y} \parallel \mathbf{A}\mathbf{X}) 
      - D_{\alpha}(\mathbf{Y} \parallel \mathbf{A}^*\mathbf{X}^*),
\end{equation}
where $(\mathbf{A}^*, \mathbf{X}^*)$ is the global minimum solution, 
and $\gamma$ is a transition path in the optimization landscape.
\end{definition}

\begin{definition}[Boltzmann Probability]
The probability of escaping from a local minimum is given by:
\begin{equation}
P = \frac{1}{Z} e^{-\beta \xi},
\end{equation}
where $Z > 0$ is a normalization constant, $\beta > 0$ is an inverse temperature parameter 
that controls stochastic exploration, and $\xi$ is the energy barrier that must be overcome 
to escape local minima.
\end{definition}

\begin{lemma} 
Let $D_l$ represent the $\alpha$-divergence at layer $l$:
\begin{equation}
    D_l = D_{\alpha}(\mathbf{X}^{(l-1)} \parallel \mathbf{A}^{(l)} \mathbf{X}^{(l)}).
\end{equation}
Then, for all $l > 1$, we have:
\begin{equation}
    D_l \leq D_{l-1}.
\end{equation}
\end{lemma}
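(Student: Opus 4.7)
The plan is to combine the within-layer monotonicity established in Theorem 3 with a controlled initialization argument enabled by the bounding factor, and to close the loop by induction on $l$.

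First, I would fix $l>1$ and examine the inner update loop of Algorithm 1 at layer $l$. By applying Theorem 3 with the substitution $(\mathbf{Y}\mapsto \mathbf{X}^{(l-1)},\,\mathbf{A}\mapsto \mathbf{A}^{(l)},\,\mathbf{X}\mapsto \mathbf{X}^{(l)})$, the multiplicative updates from Theorem 1 generate a non-increasing sequence of values of $D_\alpha(\mathbf{X}^{(l-1)} \parallel \mathbf{A}^{(l)} \mathbf{X}^{(l)})$. Hence the converged value $D_l$ is bounded above by the value taken at the initialization of layer $l$; it is therefore sufficient to show that the initial divergence at layer $l$ is already at most $D_{l-1}$.

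Second, I would analyze the bounding-factor initialization
\[
\mathbf{A}^{(l)} = (1-bf)\,\mathbf{A}_{rand} + bf\,\mathbf{A}_{base},\qquad \mathbf{A}_{base} = \mathrm{mean}(\mathbf{A}^{(l-1)})\cdot \mathds{1}_{R_{l-1}\times R_l},
\]
together with the feasibility of $\mathbf{X}^{(l)}$. The idea is to exploit the joint convexity of $D_\alpha$ in its second argument: because $\mathbf{A}^{(l)}$ is a convex mixture of a random component and a rank-one summary of $\mathbf{A}^{(l-1)}$, the initial approximation $\mathbf{A}^{(l)} \mathbf{X}^{(l)}$ inherits structural information from the converged layer $l-1$ factorization. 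Intuitively, the bounding factor acts as the catalyst in the chemical analogy, lowering the effective activation barrier and ensuring that the starting divergence lies within the attraction basin carved out by the previous layer's converged solution. Combining this initial bound with the monotonic decrease from Theorem 3 yields $D_l\leq D_l^{init}\leq D_{l-1}$, completing the inductive step from $l-1$ to $l$ (the base case $l=2$ follows from the analogous comparison to the converged $\alpha$-divergence between $\mathbf{Y}$ and $\mathbf{A}^{(1)} \mathbf{X}^{(1)}$).

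The main obstacle is the second step: rigorously bounding the initial $D_\alpha$ at layer $l$ by the converged $D_{l-1}$. The difficulty is that these two divergences compare pairs of matrices of different dimensions (namely $R_{l-1}\times T$ versus $R_{l-2}\times T$), so no direct substitution is available. Overcoming this requires combining the explicit rank-one averaging structure of $\mathbf{A}_{base}$ with a convexity or data-processing-style monotonicity property of the $\alpha$-divergence under nonnegative linear transformations, so that the new layer's reconstruction error is guaranteed not to exceed the error already absorbed by the previous layer.
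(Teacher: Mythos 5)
There is a genuine gap, and you have essentially identified it yourself: your argument reduces the claim to showing that the divergence at the \emph{random} initialization of layer $l$, $D_l^{init}=D_\alpha\bigl(\mathbf{X}^{(l-1)}\,\|\,\mathbf{A}^{(l),init}\mathbf{X}^{(l),init}\bigr)$, is already at most $D_{l-1}$, and that step is neither proved nor provable by the tools you invoke. With $\mathbf{X}^{(l)}$ drawn at random and $\mathbf{A}^{(l)}$ only a convex mixture $(1-bf)\mathbf{A}_{rand}+bf\,\mathbf{A}_{base}$, nothing controls the scale or structure of $\mathbf{A}^{(l)}\mathbf{X}^{(l)}$ at initialization, so $D_l^{init}$ can be arbitrarily larger than $D_{l-1}$; convexity of $D_\alpha$ in its second argument only compares mixtures of reconstructions of the \emph{same} target, and a data-processing-style monotonicity cannot bridge the $R_{l-1}\times T$ versus $R_{l-2}\times T$ mismatch you point out. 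The catalyst analogy (``the bounding factor lowers the activation barrier'') is an interpretation, not an inequality. So your chain $D_l\le D_l^{init}\le D_{l-1}$ breaks at its second link, and the first link alone (within-layer monotonic decrease, your correct use of Theorem 3.2) gives no relation between consecutive layers. The induction framing is also vacuous as written: the step never uses the hypothesis $D_{l-1}\le D_{l-2}$.

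The paper avoids the initialization entirely. Its two ingredients are: (i) treat the converged factors of layer $l$ as (approximate) minimizers of $D_\alpha(\mathbf{X}^{(l-1)}\,\|\,\mathbf{A}\mathbf{X})$ and compare against the \emph{previous layer's converged factors} as the candidate point, giving $D_l\le D_\alpha\bigl(\mathbf{X}^{(l-1)}\,\|\,\mathbf{A}^{(l-1)}\mathbf{X}^{(l-1)}\bigr)$; and (ii) invoke the non-increasing property across two consecutive layers to assert $D_\alpha\bigl(\mathbf{X}^{(l-1)}\,\|\,\mathbf{A}^{(l-1)}\mathbf{X}^{(l-1)}\bigr)\le D_\alpha\bigl(\mathbf{X}^{(l-2)}\,\|\,\mathbf{A}^{(l-1)}\mathbf{X}^{(l-1)}\bigr)=D_{l-1}$. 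In other words, the quantity you need to compare with $D_{l-1}$ is not the layer-$l$ starting value but the divergence of $\mathbf{X}^{(l-1)}$ from its own reconstruction produced at layer $l-1$; that is the missing idea in your proposal. If you want to salvage your route, you would have to replace the random initialization by (or bound it through) a comparison point built from the previous layer's solution, which is exactly the substitution the paper makes.
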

\textit{Proof} Appendix D.

\begin{theorem} 
Let $P_l$ represent the probability of escaping from a local minimum
at layer $l$. Then, for all sufficiently large $l$ we have: 
\begin{equation}
    P_l \;\ge\; P_{l-1}.
\end{equation}
\end{theorem}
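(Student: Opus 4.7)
The plan is to reduce the inequality $P_l \ge P_{l-1}$ to a corresponding inequality on the energy barriers $\xi_l \le \xi_{l-1}$, and then to derive the latter from the monotonicity of the layerwise $\alpha$-divergence already established in the preceding lemma. Since the Boltzmann formula gives
\begin{equation}
P_l = \frac{1}{Z}\,e^{-\beta\,\xi_l},\qquad P_{l-1} = \frac{1}{Z}\,e^{-\beta\,\xi_{l-1}},
\end{equation}
with $\beta,Z>0$ and $x\mapsto e^{-\beta x}$ strictly decreasing, it suffices to show that $\xi_l \le \xi_{l-1}$ for all sufficiently large $l$. The whole proof thus hinges on translating the divergence inequality $D_l \le D_{l-1}$ into a statement about the \emph{maxima} of the cost along the optimization trajectory within each layer, not just about its values at the endpoints.

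First I would fix, for each layer $\ell$, the optimization path $\gamma_\ell$ generated by the multiplicative update rules of Theorem 3.1 starting from the bounded initialization in Algorithm 1, and denote by $(\mathbf{A}^{(\ell),*},\mathbf{X}^{(\ell),*})$ the limit point of that descent. By the non-increasing property proved in Theorem 3.2, the cost along $\gamma_\ell$ attains its maximum at the initialization, so
\begin{equation}
\max_{\gamma_\ell} D_\alpha = D_\alpha\bigl(\mathbf{X}^{(\ell-1)}\,\|\,\mathbf{A}^{(\ell)}_{\mathrm{init}}\mathbf{X}^{(\ell)}_{\mathrm{init}}\bigr),
\end{equation}
while the asymptotic value equals the layerwise optimum $D_\ell$. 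The energy barrier at layer $\ell$ is therefore
\begin{equation}
\xi_\ell = \max_{\gamma_\ell} D_\alpha - D_\alpha(\mathbf{Y}\|\mathbf{A}^*\mathbf{X}^*).
\end{equation}

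Next, the bounding factor $bf$ in line 10 of Algorithm 1 ensures that $\mathbf{A}^{(\ell)}_{\mathrm{init}}$ is a convex combination of a random matrix and the mean of the previous layer's basis. For sufficiently large $\ell$ the iterates have settled into a regime where this convex combination produces an initial product $\mathbf{A}^{(\ell)}_{\mathrm{init}}\mathbf{X}^{(\ell)}_{\mathrm{init}}$ whose divergence from $\mathbf{X}^{(\ell-1)}$ is dominated by the analogous quantity at layer $\ell-1$; combining this with Lemma 3.3 ($D_\ell\le D_{\ell-1}$) yields $\max_{\gamma_\ell}D_\alpha \le \max_{\gamma_{\ell-1}}D_\alpha$, and hence $\xi_\ell \le \xi_{\ell-1}$. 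Substituting into the Boltzmann formula and using monotonicity of the exponential gives $P_l \ge P_{l-1}$, which is the claim.

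The hard part, and the reason for the ``sufficiently large $l$'' qualifier, is precisely the step that bounds the initialization cost at layer $\ell$ by the initialization cost at layer $\ell-1$. Early layers see raw input and may exhibit large fluctuations unrelated to the divergence hierarchy, so one cannot hope for monotonicity from $\ell=2$ onwards without further assumptions; one must instead argue that after enough layers the catalyst-like term $bf\cdot\mathbf{A}_{\text{base}}$ dominates over the random component, stabilizing $\mathbf{A}^{(\ell)}_{\mathrm{init}}$ so that Lemma 3.3 propagates from the limit of the descent to its starting point. I would handle this by making the ``sufficiently large'' threshold explicit in terms of the contraction induced by the bounding factor, and then invoking Lemma 3.3 in a straightforward inductive step to close the argument.
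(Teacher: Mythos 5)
Your overall strategy coincides with the paper's: use the Boltzmann formula and the strict monotonicity of $x\mapsto e^{-\beta x}$ to reduce $P_l\ge P_{l-1}$ to $\xi_l\le\xi_{l-1}$, and obtain the latter from the eventual non-increase of the per-layer path maxima $M_l=\max_{\gamma_l}D_\alpha$ together with the layerwise divergence lemma. (A minor structural difference: the paper works with $\xi_l=M_l-D_{l-1}$, so it needs both $M_l\le M_{l-1}$ and $D_{l-1}\le D_{l-2}$, writing $\xi_l-\xi_{l-1}=-\mu_l-\delta_{l-1}\le 0$; with your fixed global-minimum reference, $\xi_l\le\xi_{l-1}$ is equivalent to $M_l\le M_{l-1}$ alone, and your invocation of the lemma is actually superfluous for that step.)

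The genuine gap is exactly where you acknowledge "the hard part": you claim that, for sufficiently large $\ell$, the bounding-factor initialization makes the initial divergence at layer $\ell$ dominated by that at layer $\ell-1$, hence $\max_{\gamma_\ell}D_\alpha\le\max_{\gamma_{\ell-1}}D_\alpha$ — but you never establish this, and it does not follow from anything proved earlier. The initialization at layer $\ell$ also involves a freshly drawn random $\mathbf{X}^{(\ell)}$ (and, for $bf<1$, a random component of $\mathbf{A}^{(\ell)}$), so the initial cost is a random quantity whose comparison across layers requires assumptions beyond Theorem 3.2 and Lemma 4.1; saying the lemma "propagates from the limit of the descent to its starting point" is an assertion, not an argument, and your closing promise to "make the threshold explicit in terms of the contraction induced by the bounding factor" is deferred rather than carried out. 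Be aware that the paper does not prove this either: it simply \emph{assumes} that $M_l$ is non-increasing for all sufficiently large $l$ and proceeds from there. So your proof has the same skeleton as the paper's, but where the paper posts an explicit hypothesis, you posit an unproven derivation; as written, that step is the missing piece, and the honest fix is either to state the eventual monotonicity of $M_l$ as an assumption (as the paper does) or to actually supply the quantitative argument controlling the random initialization that you sketch.
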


\begin{proof}
Let $M_l$ be the maximum divergence along the optimization path at layer $l$. 
Assume $M_l$ is non-increasing for all sufficiently large $l$. Define:
\begin{equation}
    \mu_l = M_{l-1} - M_l,
\end{equation}
\begin{equation}
    \delta_l = D_{l-1} - D_l,
\end{equation}
\begin{equation}
    \xi_l = M_l - D_{l-1}.
\end{equation}
Then we have:
\begin{equation}
\begin{aligned}
\forall\ l > 1:\quad 
\xi_l - \xi_{l-1}
&= (M_l - D_{l-1}) - (M_{l-1} - D_{l-2}) \\
&= (M_l - M_{l-1}) + (D_{l-2} - D_{l-1}) \\
&= -\,\mu_l - \delta_{l-1}.
\end{aligned}
\end{equation}
By Lemma 4.1, $D_l$ is non-increasing for all $l > 1$, hence:
\begin{equation}
    \forall\ l \ge 3:\quad \delta_{l-1} \ge 0.
\end{equation}
Since $M_l$ is non-increasing for all sufficiently large $l$:
\begin{equation}
    \exists\,L_M \in \mathbb{N}\ \text{such that}\ \forall\,l \ge L_M:\ \mu_l \ge 0.
\end{equation}
Set $L^* := \max\{L_M, 3\}$. Then we have:
\begin{equation}
\begin{aligned}
\forall\,l \ge L^*:\quad & \mu_l \ge 0,\ \delta_{l-1} \ge 0 
\ \Longrightarrow\ -\,\mu_l - \delta_{l-1} \ge 0 \\
& \Longrightarrow\ \xi_l - \xi_{l-1} \le 0 \ \Longrightarrow\ \xi_l \le \xi_{l-1} \\
& \Longrightarrow\ \tfrac{1}{Z} e^{-\beta \xi_l} \ge \tfrac{1}{Z} e^{-\beta \xi_{l-1}} 
\ \Longrightarrow\ P_l \ge P_{l-1}.
\end{aligned}
\end{equation}
\qedhere \text{ QED.}
\end{proof}

\begin{corollary}
Thus, the probability of escaping a local minimum is higher in the multi-layer model, which implies that multi-layer NMF reduces the probability of being trapped in a local minimum. The energy barrier of the final layer of a multi-layer algorithm is smaller than the energy barrier of a single layer, and the probability of escaping from local minima is higher. As it is easier to overcome the barrier and freely explore the energy landscape, the probability of being stuck in a local minimum is lower for multi-layer NMF than for single-layer NMF.
\end{corollary}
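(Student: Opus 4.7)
The plan is to obtain the corollary as a telescoping consequence of Theorem 4.2, together with Lemma 4.1 applied down the hierarchy of layers. First I would iterate the pointwise inequality $\xi_{l}\le\xi_{l-1}$ (valid for every $l\ge L^{*}$, as established in the proof of Theorem 4.2) to obtain $\xi_{L}\le\xi_{L^{*}}$, and then combine this with Lemma 4.1, which gives $D_{l}\le D_{l-1}$ for all $l>1$, so that the divergence values themselves are non-increasing along the factorization trajectory. Monotonicity of the exponential then yields $e^{-\beta\xi_{L}}\ge e^{-\beta\xi_{L^{*}}}$, and dividing by the common normalization constant $Z$ produces $P_{L}\ge P_{L^{*}}$.

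Second, I would connect the resulting multi-layer bound to the single-layer baseline claimed in the corollary. Interpreting the single-layer algorithm as the case $L=1$, the quantity $\xi_{1}=M_{1}-D_{0}$ (in the notation of Theorem 4.2) plays the role of the single-layer activation barrier, with escape probability $P_{1}=\tfrac{1}{Z}e^{-\beta\xi_{1}}$. Chaining the inequalities $\xi_{L}\le\xi_{L-1}\le\cdots\le\xi_{L^{*}}$ from step one and extending them down to $\xi_{1}$ under the same monotonicity hypotheses, I would conclude $\xi_{L}\le\xi_{1}$, and therefore $P_{L}\ge P_{1}$. This is precisely the statement that the final-layer barrier is smaller and the escape probability larger, which is what the corollary asserts.

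The main obstacle I expect lies in the initial segment $l<L^{*}$, where the assumed non-increase $\mu_{l}\ge 0$ of the path maximum $M_{l}$ is not yet guaranteed; in that regime the telescoping argument cannot be extended naively, and one must either restrict the conclusion to sufficiently deep architectures or invoke a property of the bounding factor $bf$ in Algorithm~\ref{alg:chemnmf} that ensures $M_{l}$ is monotone from the start. A secondary but conceptually important obstacle is that, in principle, the Boltzmann normalization $Z$ and the inverse temperature $\beta$ could depend on the layer index through the changing geometry of the divergence landscape; to pass from $\xi_{L}\le\xi_{1}$ to $P_{L}\ge P_{1}$ I would either assume $(Z,\beta)$ to be layer-independent or quantify how mildly they vary across layers so that the exponential gap dominates. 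Once these conditions are stated cleanly, the corollary follows as a direct chain of implications and does not require additional computation beyond what Theorem 4.2 already provides.
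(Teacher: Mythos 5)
Your route is essentially the paper's own: the corollary carries no separate proof and is intended as an immediate consequence of Theorem 4.2's chain $\xi_l \le \xi_{l-1} \Rightarrow P_l \ge P_{l-1}$ for $l \ge L^{*}$, which is exactly the telescoping you perform before invoking monotonicity of $e^{-\beta\xi}$. The two obstacles you flag are genuine but are left implicit in the paper as well: monotonicity of $M_l$ (hence of $\xi_l$) below $L^{*}$ is never established, so the comparison with the single-layer baseline---your $\xi_L \le \xi_1$, or equivalently the assumption $P_\infty > P_1$ that the paper later uses without derivation in Theorem 4.3---is assumed rather than proved, as is the layer-independence of $Z$ and $\beta$ in the Boltzmann formula.
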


\begin{corollary}
Although the final energy barrier decreases across layers, the accumulation of non-negative energy barriers in a multi-layer algorithm results in a higher total energy barrier compared to a single-layer model. Consequently, convergence slows down, as more iterations are required to overcome the cumulative energy barriers and explore the energy landscape in search of the global minimum. This aligns with the exploration–exploitation trade-off, where the improved exploration in multi-layer NMF enhances the ability to escape local minima but comes at the cost of slower exploitation, requiring more steps to refine the optimal solution.
\begin{equation}
\xi_{ML} = \sum_{l=1}^{L} \xi_{l} 
= \xi_{1} + \sum_{l=2}^{L} \xi_{l} 
= \xi_{S} + \sum_{l=2}^{L} \xi_{l} 
> \xi_{S},
\end{equation}
where $\xi_{S}$ and $\xi_{ML}$ are the total energy barriers of single-layer and multi-layer NMF, respectively.
\end{corollary}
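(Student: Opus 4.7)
The plan is to unpack the displayed chain of (in)equalities directly by applying the energy-barrier definition layer by layer. Since layer $\ell$ of Algorithm~\ref{alg:chemnmf} minimizes $D_\alpha(\mathbf{X}^{(\ell-1)} \parallel \mathbf{A}^{(\ell)}\mathbf{X}^{(\ell)})$ on its own cost landscape, Definition 4.1 applied with $\mathbf{X}^{(\ell-1)}$ in place of $\mathbf{Y}$ yields a well-defined per-layer barrier $\xi_\ell$ along the corresponding transition path $\gamma_\ell$. The multi-layer barrier $\xi_{ML}$ is then naturally the total work of traversing all $L$ landscapes in sequence, giving the first equality $\xi_{ML} = \sum_{\ell=1}^{L} \xi_\ell$ essentially by definition.

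Next I would establish the two substitutions inside the chain. The rearrangement $\xi_{ML} = \xi_1 + \sum_{\ell=2}^{L} \xi_\ell$ is purely algebraic. To identify $\xi_1$ with $\xi_S$, I would observe that layer one is driven by the original data $\mathbf{Y} = \mathbf{X}^{(0)}$ and executes exactly the single-layer multiplicative updates of Theorem 4.1; its cost landscape therefore coincides with that of the single-layer $\alpha$-NMF problem, and the two barriers match by Definition 4.1. These two steps together recover the entire equality chain up to the final strict inequality.

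The main obstacle is sharpening $\sum_{\ell=2}^{L} \xi_\ell \ge 0$ to the strict inequality $\xi_{ML} > \xi_S$. Non-negativity of each $\xi_\ell$ is immediate from Definition 4.1, since the maximum cost along an optimization path cannot undercut the endpoint cost, but this only gives $\xi_{ML} \ge \xi_S$; equality would require every higher layer to be a trivial no-op. To rule that out I would invoke the bounding-factor initialization in lines 7–10 of Algorithm~\ref{alg:chemnmf}: for $\ell \ge 2$ the factor $\mathbf{A}^{(\ell)}$ is drawn as the convex combination $(1-bf)\mathbf{A}_{\mathrm{rand}} + bf\,\mathbf{A}_{\mathrm{base}}$, which is almost surely not a stationary point of the multiplicative updates, so the iterates must traverse a path of strictly positive maximum divergence before settling into the next local minimum. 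This forces $\xi_\ell > 0$ for at least one $\ell \ge 2$, which yields $\xi_{ML} > \xi_S$. Finally, to close the corollary's convergence-speed claim I would interpret Definition 4.2 in the Arrhenius sense already emphasized by the chemistry analogy: the expected number of iterations needed to surmount a cumulative barrier scales like $e^{\beta \xi}$, so the strictly larger $\xi_{ML}$ translates into a strictly larger iteration count, precisely the exploitation cost incurred in exchange for the improved exploration established in Theorem 4.3.
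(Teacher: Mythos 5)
Your proposal follows essentially the same argument the paper itself gives (the corollary's displayed chain is its proof): define the multi-layer barrier as the sum of per-layer barriers, identify the first-layer barrier $\xi_1$ with the single-layer barrier $\xi_S$ since layer one acts on $\mathbf{Y}=\mathbf{X}^{(0)}$, and conclude from the non-negativity of the remaining terms. Your additions — invoking the random (bounding-factor) initialization to force at least one strictly positive $\xi_\ell$ for $\ell\ge 2$, and the Arrhenius-style reading of the escape probability to justify the slower-convergence claim — merely make explicit two points the paper asserts without argument, so the approach is correct and aligned.
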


\begin{lemma} 
The escape probability $P_l$ converges to a finite value:
\begin{equation}
\lim_{l\to\infty} P_l \;=\; P_\infty.
\end{equation}
\end{lemma}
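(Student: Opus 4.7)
The plan is to recognize this as a direct application of the monotone convergence theorem for real sequences, using Theorem~4.2 (which gives eventual monotonicity) together with a uniform upper bound coming from the Boltzmann form of $P_l$.

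First I would recall the explicit definition $P_l = \tfrac{1}{Z} e^{-\beta \xi_l}$ with $Z,\beta > 0$. Since $\xi_l = M_l - D_{l-1}$ is the gap between the maximum cost along the optimization path and a cost value attained on that same path, $\xi_l \ge 0$. Therefore $0 < P_l \le 1/Z$ for every $l$, so the sequence $\{P_l\}_{l \ge 1}$ is bounded above (and below by $0$).

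Second, I would invoke Theorem~4.2 to obtain an index $L^{*}$ such that $P_l \ge P_{l-1}$ for all $l \ge L^{*}$. Thus the tail $\{P_l\}_{l \ge L^{*}}$ is monotonically non-decreasing. Combined with the uniform upper bound $1/Z$ from the previous step, the monotone convergence theorem for real sequences yields the existence of
\begin{equation}
P_{\infty} := \lim_{l\to\infty} P_l = \sup_{l \ge L^{*}} P_l \in (0,\, 1/Z],
\end{equation}
which is finite. Equivalently, since $\xi_l$ is eventually non-increasing and bounded below by $0$, $\xi_\infty := \lim_{l\to\infty}\xi_l$ exists in $[0,\infty)$, and continuity of $x \mapsto \tfrac{1}{Z}e^{-\beta x}$ gives $P_\infty = \tfrac{1}{Z} e^{-\beta \xi_\infty}$.

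I do not expect a serious obstacle here: the only point that deserves a sentence of justification is that $\xi_l \ge 0$, which follows because $D_{l-1}$ is itself a divergence value that lies on the path whose maximum defines $M_l$, so $M_l \ge D_{l-1}$. Everything else is a one-line appeal to the monotone convergence theorem combined with the bound coming from the exponential form of $P_l$.
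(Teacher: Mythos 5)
Your proof is correct, but it reaches the limit by a genuinely different route than the paper. The paper never uses the eventual monotonicity of $P_l$: it observes that $M_l$ and $D_l$ are non-increasing and bounded below by $0$, hence converge to finite limits $M_\infty$ and $D_\infty$; a triangle-inequality estimate then gives $\xi_l = M_l - D_{l-1} \to M_\infty - D_\infty$, and continuity of $x \mapsto \tfrac{1}{Z}e^{-\beta x}$ yields $P_\infty = \tfrac{1}{Z}e^{-\beta(M_\infty - D_\infty)}$. You instead recycle the earlier theorem establishing $P_l \ge P_{l-1}$ for all $l \ge L^{*}$ and apply the monotone convergence theorem to the bounded tail of $\{P_l\}$. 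Both arguments ultimately rest on the same hypotheses (eventual monotonicity and boundedness of $M_l$ and $D_l$); yours is shorter and makes the logical dependence on the escape-probability theorem explicit, while the paper's version works one level lower (on $M_l$, $D_l$ directly) and has the advantage of identifying the limit in closed form as $\tfrac{1}{Z}e^{-\beta(M_\infty - D_\infty)}$ --- which your closing remark recovers anyway via $\xi_\infty = M_\infty - D_\infty$.

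One caveat: your justification that $\xi_l \ge 0$ (``$D_{l-1}$ lies on the path whose maximum defines $M_l$'') is not literally true as the quantities are defined, since $D_{l-1} = D_{\alpha}(\mathbf{X}^{(l-2)} \parallel \mathbf{A}^{(l-1)}\mathbf{X}^{(l-1)})$ is measured against $\mathbf{X}^{(l-2)}$, whereas the layer-$l$ path whose maximum is $M_l$ consists of divergences against $\mathbf{X}^{(l-1)}$ and starts from a fresh (bounded) initialization; nonnegativity of $\xi_l$ is really an implicit assumption of the paper (compare the ``non-negative energy barriers'' invoked in the corollary on $\xi_{ML} = \sum_l \xi_l$). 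The bound you actually need is easy to rescue without it: since $M_l \ge 0$ and $D_{l-1} \le D_1$, one has $\xi_l \ge -D_1$, hence $P_l \le \tfrac{1}{Z}e^{\beta D_1}$ uniformly (or simply note that $P_l$, being a probability, is at most $1$), which is all the monotone convergence theorem requires.
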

\textit{proof} Appendix E.

\begin{theorem} 
Across multiple attempts, the multi-layer NMF algorithm has a smaller probability of remaining trapped in a local minimum compared to the single-layer NMF algorithm.
\end{theorem}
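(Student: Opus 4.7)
The plan is to combine the per-attempt escape-probability inequality from the previous theorem ($P_l \geq P_{l-1}$ for sufficiently large $l$) with a standard independent-trials argument. The single-layer escape probability is $P_S = P_1$, while the relevant quantity for an $L$-layer algorithm is $P_L$. So I would first chain the layerwise inequality to obtain $P_L \geq P_S$, and then convert per-attempt escape probabilities into the cumulative probability of remaining trapped after $N$ attempts.

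First, I would unroll the previous theorem: applying $P_l \geq P_{l-1}$ iteratively for all $l \geq L^*$ yields $P_L \geq P_{L^*}$, and since the single-layer algorithm corresponds to $l = 1$, its escape probability satisfies $P_S = P_1$. Taking $L$ large enough therefore gives $P_L \geq P_S$. The preceding lemma ($P_l \to P_\infty$) ensures the ordering is well-defined and uniformly bounded, while the first corollary supplies the physical reading in terms of a smaller final-layer energy barrier and hence easier escape.

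Next, I would model $N$ successive runs of the algorithm as independent Bernoulli trials, each with per-attempt escape probability $P$. The probability that the algorithm remains trapped after all $N$ attempts is $(1-P)^N$. Because $(1-x)^N$ is monotonically decreasing in $x\in[0,1]$, the inequality $P_L \geq P_S$ immediately yields
\begin{equation}
(1-P_L)^N \;\leq\; (1-P_S)^N,
\end{equation}
which is precisely the stated claim that the multi-layer algorithm has a smaller probability of remaining stuck in a local minimum across multiple attempts.

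The hard part will be justifying the independence of attempts that this Bernoulli model presumes, since the multi-layer procedure reuses information between layers through the bounding-factor convex combination $\mathbf{A}^{(\ell)} = (1-bf)\,\mathbf{A}_{rand} + bf\,\mathbf{A}_{base}$ in Algorithm~\ref{alg:chemnmf}. I would argue that $\mathbf{A}_{rand}$ and $\mathbf{X}^{(\ell)}$ are freshly sampled on every run, supplying enough stochasticity to view attempts as independent draws from the Boltzmann distribution introduced earlier; a fully rigorous treatment would replace exact independence by a mixing or stationarity argument, but this refinement lies beyond the asymptotic scope of the statement.
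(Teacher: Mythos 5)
Your argument has a genuine gap at its pivotal step, the claim that chaining the previous theorem gives $P_L \ge P_S = P_1$. Theorem 4.2's layerwise monotonicity $P_l \ge P_{l-1}$ is only established for \emph{sufficiently large} $l$ (i.e.\ $l \ge L^*$), so iterating it yields $P_L \ge P_{L^*}$ and nothing about how $P_{L^*}$ compares with $P_1$; the early layers, where monotonicity is not guaranteed, are exactly the ones you need to bridge. The paper does not attempt this chaining. Instead it invokes the convergence $P_l \to P_\infty$ (Lemma 4.2) together with the explicit assumption $P_\infty > P_1$, picks $\varepsilon \in (0, P_\infty - P_1)$, and bounds the multi-layer survival probability by splitting the product $\prod_{l=1}^{n}(1-P_l)$ at the index $l_\varepsilon$ where $P_l \ge P_\infty - \varepsilon$ kicks in, obtaining $\mathbb{P}(S_n) \le C_\varepsilon\,(1-(P_\infty-\varepsilon))^{\,n-l_\varepsilon+1} < (1-P_1)^n$ for $n$ large. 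Without either that hypothesis or some substitute, your conclusion $P_L \ge P_1$ is unsupported; and even granting it, your comparison $(1-P_L)^N \le (1-P_S)^N$ is non-strict, whereas the theorem asserts a strictly smaller trapping probability, which the paper obtains precisely from the strict gap $P_\infty > P_1$.

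There is also a modeling mismatch worth noting. The paper's Remark defines an ``attempt'' as one run at a given layer: the multi-layer survival probability is the inhomogeneous product $\prod_{l=1}^{n}(1-P_l)$ over successive layers (Lemma 4.3, derived by conditioning, $\mathbb{P}(S_l)=\mathbb{P}(S_{l-1})(1-P_l)$, so no independence assumption is needed), while the single-layer benchmark repeats the same layer, giving $(1-P_1)^n$. You instead treat each attempt as a full restart of the $L$-layer algorithm with a fixed per-run escape probability $P_L$, which changes the quantity being compared and forces you into the independence discussion you flag at the end; the paper's conditional-probability formulation sidesteps that issue entirely. If you want to salvage your route, state $P_\infty > P_1$ (or prove $P_L > P_1$ for large $L$ from it via Lemma 4.2) as an explicit hypothesis and work with the paper's per-layer survival product rather than the homogeneous Bernoulli restart model.
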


\begin{proof}
Let $L_e$ denote the number of attempts at which the algorithm escapes a local minimum. 
For each layer $l \in \mathbb{N}$, define the survival event as:
\begin{equation}
S_l = \{L_e > l\}, 
\end{equation}
which means the process has not yet escaped any local minimum by layer $l$. \\

\begin{remark}
We interpret each attempt as one run of the algorithm at a given layer. 
Thus, the $l$-th attempt corresponds to applying the algorithm at layer $l$. 
In the multi-layer setting, the algorithm proceeds through successive layers, 
while in the single-layer setting, all attempts are confined to the same layer. 
The total number of attempts is denoted by $n$, meaning the algorithm has been 
applied up to layer $n$.
\end{remark}

\begin{lemma}  
For all $n \geq 1$, the survival probability $\mathbb{P}(S_n)$ satisfies:
\begin{equation}
   \mathbb{P}(S_n) = \prod_{l=1}^{n} (1-P_l). 
\end{equation}
\end{lemma}
\textit{proof} Appendix F. \\ \\

By Lemma 4.2, $\lim\limits_{l\to\infty} P_l \;=\; P_\infty.$ 
The formal definition of the limit implies:
\begin{equation}
\forall \, \varepsilon > 0, \; \exists \, l_\varepsilon \in \mathbb{N} 
\;\; \text{such that} \;\; \forall\, l \ge l_\varepsilon 
\;\Longrightarrow\; P_\infty - P_l \le \varepsilon.
\end{equation}
Recall Lemma~4.3 and split the product at $l_\varepsilon$. Then, for any $n \ge l_\varepsilon$ we have:
\begin{align}
\mathbb{P}(S_n)
&= \prod_{l=1}^{n}(1-P_l) \notag \\
&= \underbrace{\Big(\prod_{l<l_\varepsilon}(1-P_l)\Big)}_{C_\varepsilon}\,
   \prod_{l=l_\varepsilon}^{n}(1-P_l) \notag \\
&\le C_\varepsilon \prod_{l=l_\varepsilon}^{n}\bigl(1-(P_\infty-\varepsilon)\bigr)
\qquad\text{(since $P_l \ge P_\infty - \varepsilon$)} \notag \\
&= C_\varepsilon\,\bigl(1-(P_\infty-\varepsilon)\bigr)^{\,n-l_\varepsilon+1}.
\label{eq:bound}
\end{align}
Hence,
\begin{equation}
\mathbb{P}(S_n) \;\le\; C_\varepsilon \,\bigl(1-(P_\infty-\varepsilon)\bigr)^{\,n-l_\varepsilon+1}.
\end{equation}

Let $\widehat{S}_n$ denote the survival event in the single-layer case. Then:
\begin{equation}
\mathbb{P}(\widehat{S}_n) = (1-P_1)^n.
\end{equation}
Since $P_\infty > P_1$, for any $\varepsilon \in (0,\, P_\infty - P_1)$ we have:
\begin{equation}
    1-(P_\infty-\varepsilon) < 1-P_1.
\end{equation}
Thus, for sufficiently large $n$ we obtain:
\begin{equation}
\mathbb{P}(S_n) \;\le\; 
C_\varepsilon \,\bigl(1-(P_\infty-\varepsilon)\bigr)^{\,n-l_\varepsilon+1}
\;<\; (1-P_1)^n \;=\; \mathbb{P}(\widehat{S}_n).
\end{equation}

This implies that across multiple attempts, the multi-layer NMF algorithm has a smaller probability of remaining trapped in a local minimum compared to the single-layer NMF algorithm. 
\qedhere \text{ QED.}
\end{proof}

\section{Experimental Results}
\subsection{Datasets}

We use two image recognition and two bioacoustic datasets for data clustering in different applications: 
face recognition, handwritten digit recognition, cardiac disease detection, and respiratory disease detection. 

For image recognition, we employ the ORL face \cite{orl} and the MNIST handwritten digit \cite{mnist} datasets. Fig.~\ref{fig:dataset} shows sample images from the ORL and MNIST datasets under clean and noisy conditions.
The ORL dataset consists of 400 grayscale facial images from 40 subjects, with 10 images per subject captured under varying conditions, all resized to $32 \times 32$ pixels. 
The MNIST dataset contains 70{,}000 grayscale images of handwritten digits (`0`–`9`); 
For our experiments, we construct a balanced subset of 400 samples (40 per digit), each normalized to $28 \times 28$ pixels. 

\begin{figure}[H]
    \centering
    \includegraphics[width=0.7\textwidth]{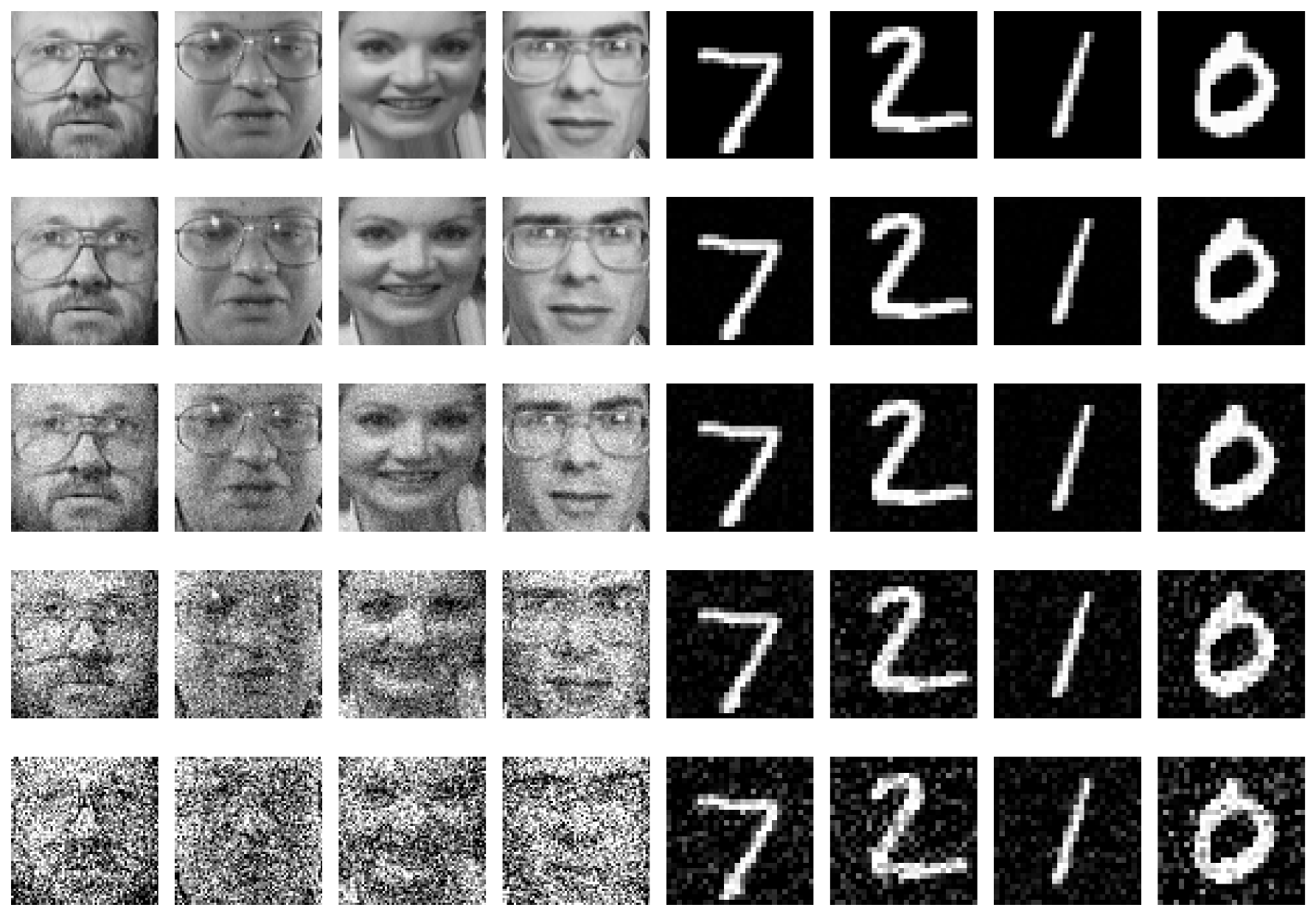}
    \captionsetup{font=footnotesize}
    \caption{Example images from the ORL face and MNIST digit datasets under clean and noisy conditions. 
    From top to bottom: clean images followed by Gaussian noise at 30, 20, 10, and 5~dB SNR levels.}
    \label{fig:dataset}
\end{figure}

In addition to image recognition applications, we cluster heart and lung abnormal sounds. We use the HLS-CMDS dataset \cite{10981596}, which is divided into heart and lung subsets, and it covers normal and abnormal sounds (e.g., atrial fibrillation, wheezing, etc). We recorded the sounds using the 3M™ Littmann CORE Digital Stethoscope from a CAE Juno™ manikin in a quiet clinical simulation lab, placing the stethoscope on standard auscultation landmarks (apex, sternal borders for the heart; upper, middle, and lower anterior chest zones for the lungs). The manikin sounds are pre-recorded from real patients and therefore already include natural noise characteristics such as clothing friction and motion artifacts. During our recordings, we kept the stethoscope steady to minimize handling noise. Recordings were conducted in a quiet environment to further reduce ambient noise. The lung subset consists of 50 recordings, divided into 6 classes (Fig~\ref{fig:hls}a). The heart subset contains 50 recordings of cardiac sounds, categorized into 10 classes (Fig.~\ref{fig:hls}b). Each audio clip is 15\,s long, sampled at 22,050 Hz, and provided in \texttt{.wav} format with metadata. All heart and lung recordings are transformed into time-frequency spectrograms using the short-time Fourier transform (STFT) with a sampling rate of 4\,kHz, a 512-point FFT window, and a hop length of 128, resulting in spectrograms of size $257 \times 470$. The dataset is publicly available, with details of the recording device, sampling rate, sensor placement, environment, and annotated sound categories provided in \cite{10981596}. 

\begin{figure}[H]
    \centering
    \includegraphics[width=0.9\textwidth]{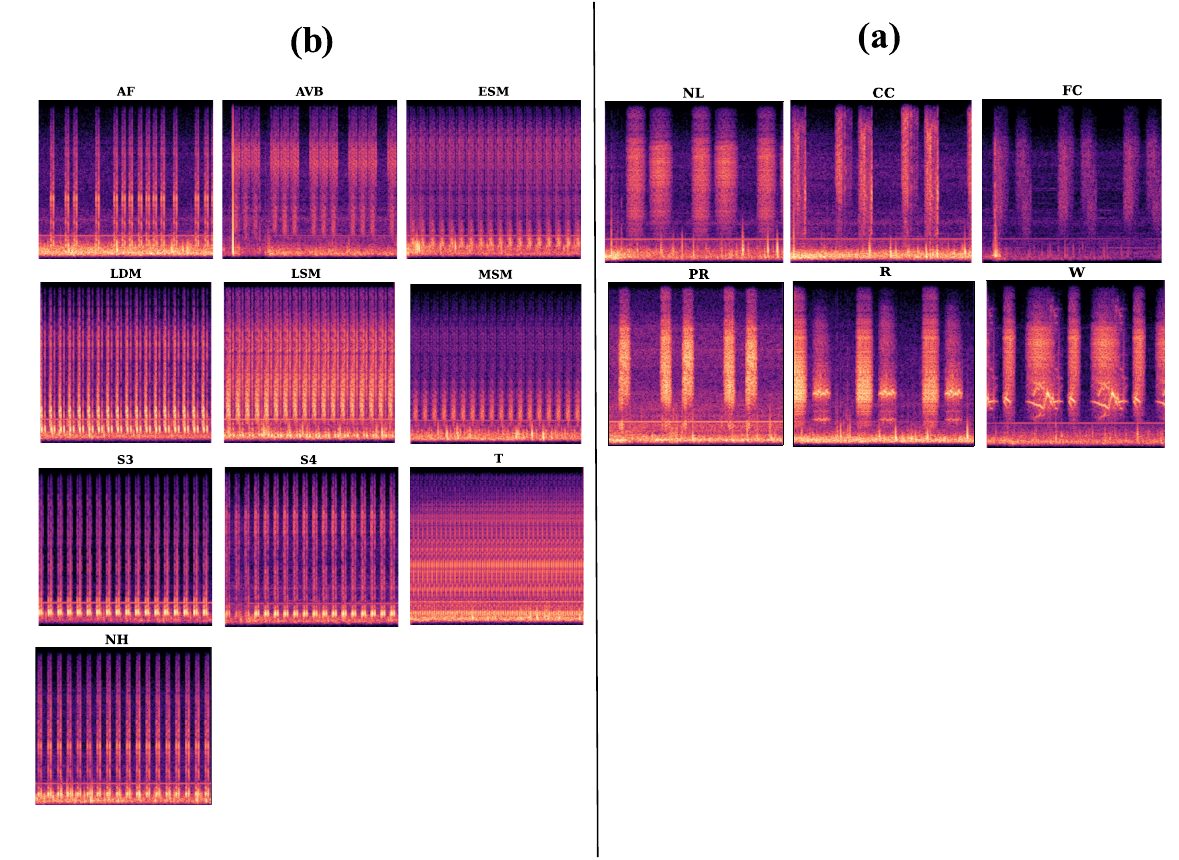}
    \captionsetup{font=footnotesize}
    \caption{Time--frequency spectrograms from the HLS-CMDS dataset: 
    \textbf{(a)} Lung sounds: CC: coarse crackles, FC: fine crackles, N: normal breathing, PR: pleural rub, R: rhonchi, W: wheeze; 
    \textbf{(b)} Heart sounds: AF: atrial fibrillation, AVB: atrioventricular block, ESM: ejection systolic murmur, 
    LDM: late diastolic murmur, LSM: late systolic murmur, MSM: mid-systolic murmur, NH: normal heart sound, 
    S3: third heart sound, S4: fourth heart sound, T: tricuspid insufficiency.}
    \label{fig:hls}
\end{figure}

\subsection{Parameter Sensitivity Analysis}
Fig.~\ref{fig:alpha} shows how $\alpha$ changes the convergence paths of the $\alpha$-divergence surface $D_\alpha(X_1,X_2)$. For $\alpha=-1$ and $\alpha=2$ the convergence highly depends on the start point. For $\alpha=0.001$ and $\alpha=0.99$ the trajectories move steadily into the minimum and show stable convergence. At $\alpha=0.5$, the surface produces monotonic descent to the global minimum. In summary, moderate $\alpha \in (0,1)$ values give robust convergence, whereas extreme values make the landscape more sensitive to initialization. Figure~\ref{fig:bf-alpha} illustrates the sensitivity of pattern recognition with respect to the boundary factor ($BF$) and the divergence parameter $\alpha$. At $BF=0$, Chem-NMF reduces to the baseline $\alpha$-NMF. Adding $BF$ improves performance, particularly at intermediate values of $\alpha$. 
\begin{figure}[H]
    \centering
    \includegraphics[width=0.7\textwidth]{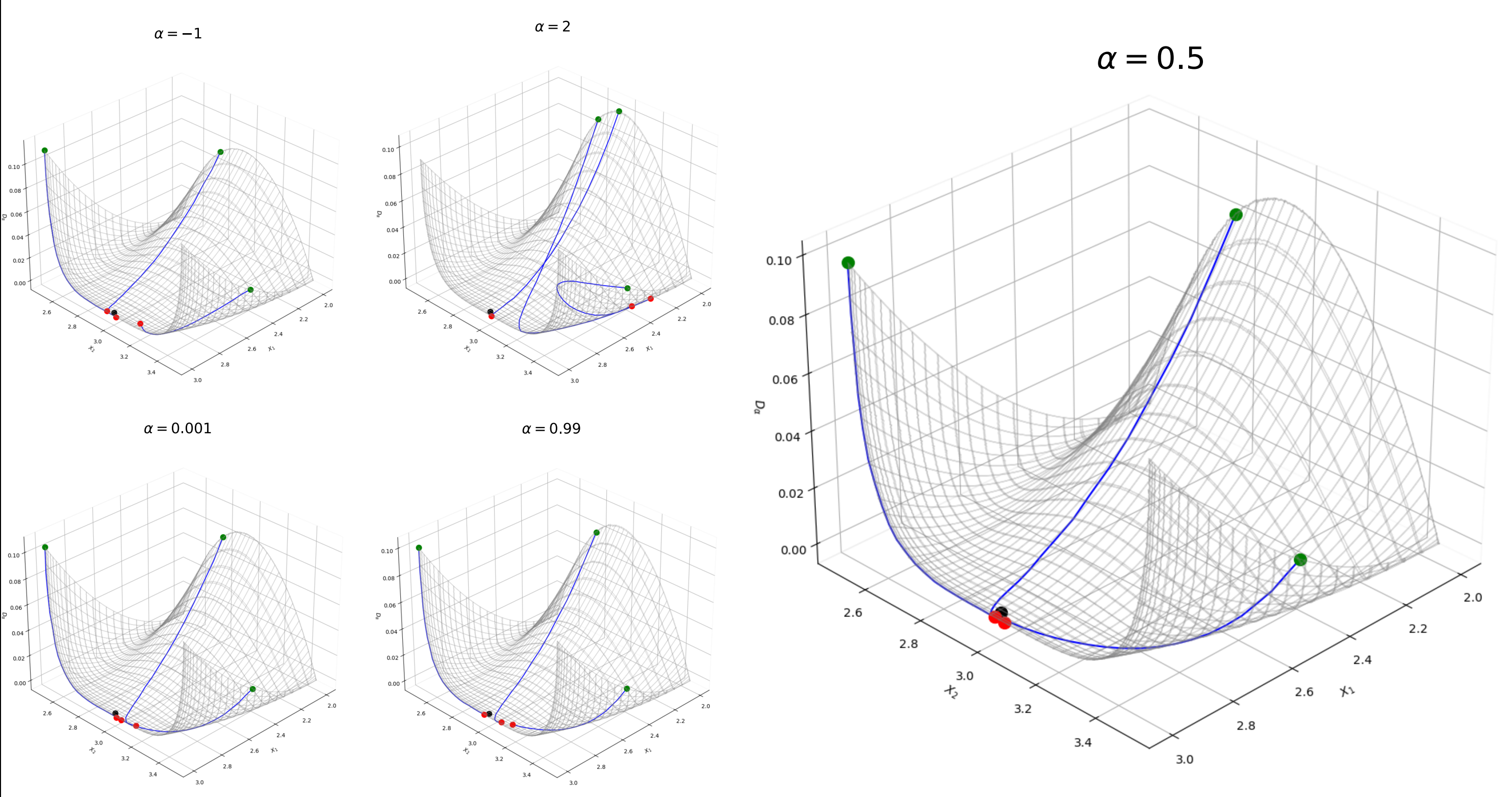}
    \captionsetup{font=footnotesize}
    \caption{Effect of $\alpha$ value on the optimization landscape. Each subplot shows the trajectory for a specific $\alpha$: green points indicate the initialization, red points denote the final optimized solutions, and the black point marks the desired global minimum.}
    \label{fig:alpha}
\end{figure}

\begin{figure}[H]
    \centering
    \includegraphics[width=0.5\textwidth]{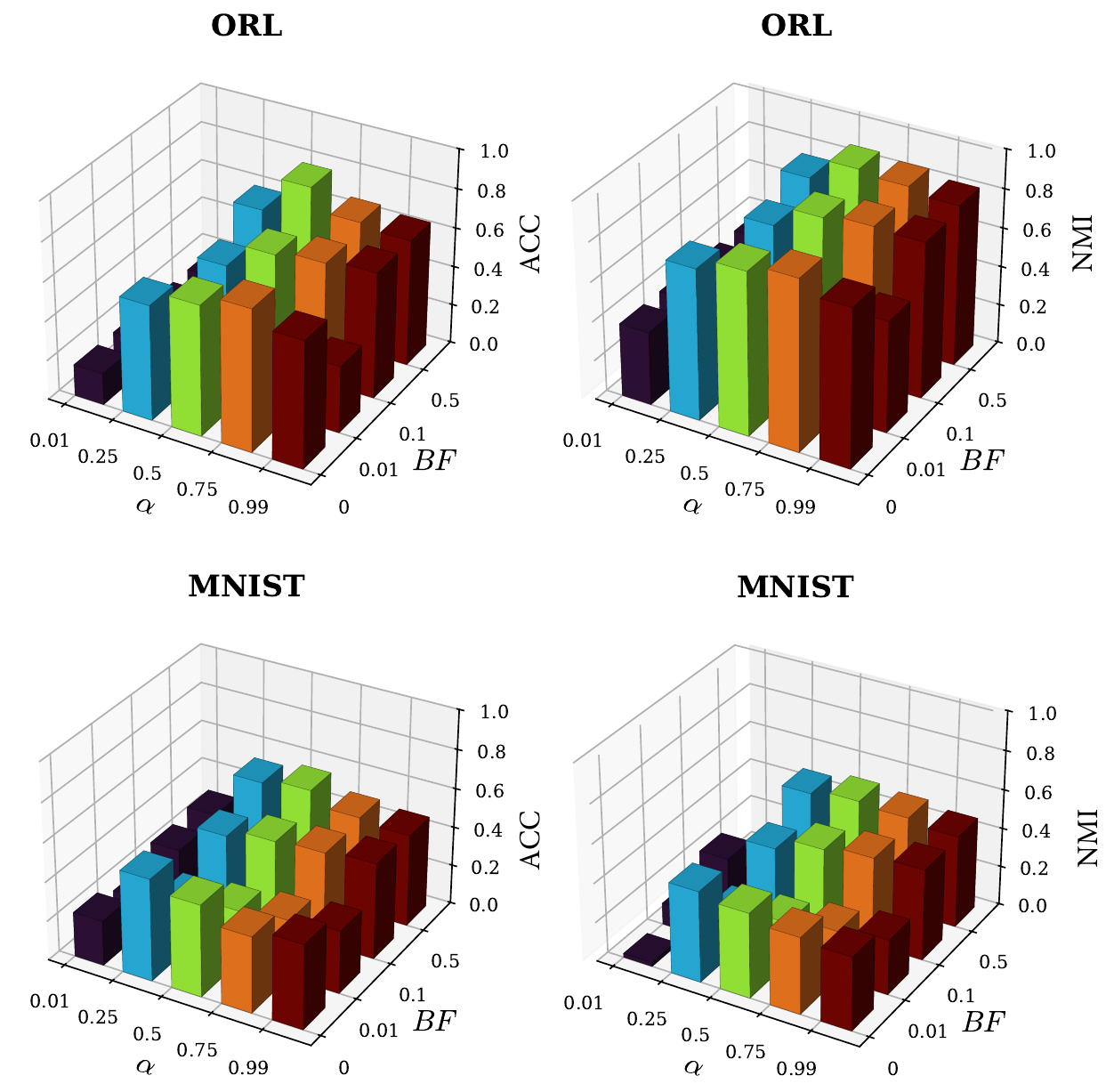}
    \captionsetup{font=footnotesize}
    \caption{Effect of $BF$ and $\alpha$ parameter on pattern recognition performance for ORL and MNIST datasets.}
    \label{fig:bf-alpha}
\end{figure}

\subsection{Robustness in Noisy Conditions}
We tested Regular NMF, \(\alpha\)-NMF, and Chem-NMF on ORL and MNIST image recognition datasets to evaluate clustering performance under different noise conditions (See Table~\ref{tab:acc-nmi} and Table~\ref{tab:mnist-acc-nmi} in Appendix). We added Gaussian noise at 5--30 dB to measure robustness. As shown in Fig.~\ref{fig:noise}, \(\alpha\)-NMF achieved higher NMI scores in the low-noise settings (5--10 dB), but its performance dropped more sharply as noise increased. Chem-NMF maintained higher scores at clean and high noise levels (20--30 dB), showing greater robustness to noise. Regular NMF consistently had the lowest values across both metrics. The divergence parameter \(\alpha\) strongly affects clustering accuracy. Mid-range values gave higher ACC and NMI, while very small or large values performed worse. Small \(\alpha\) tends to overfit noise, while large \(\alpha\) loses fine structure, so the middle values provide a balance. 

\begin{figure}[H]
    \centering
    \includegraphics[width=0.9\textwidth]{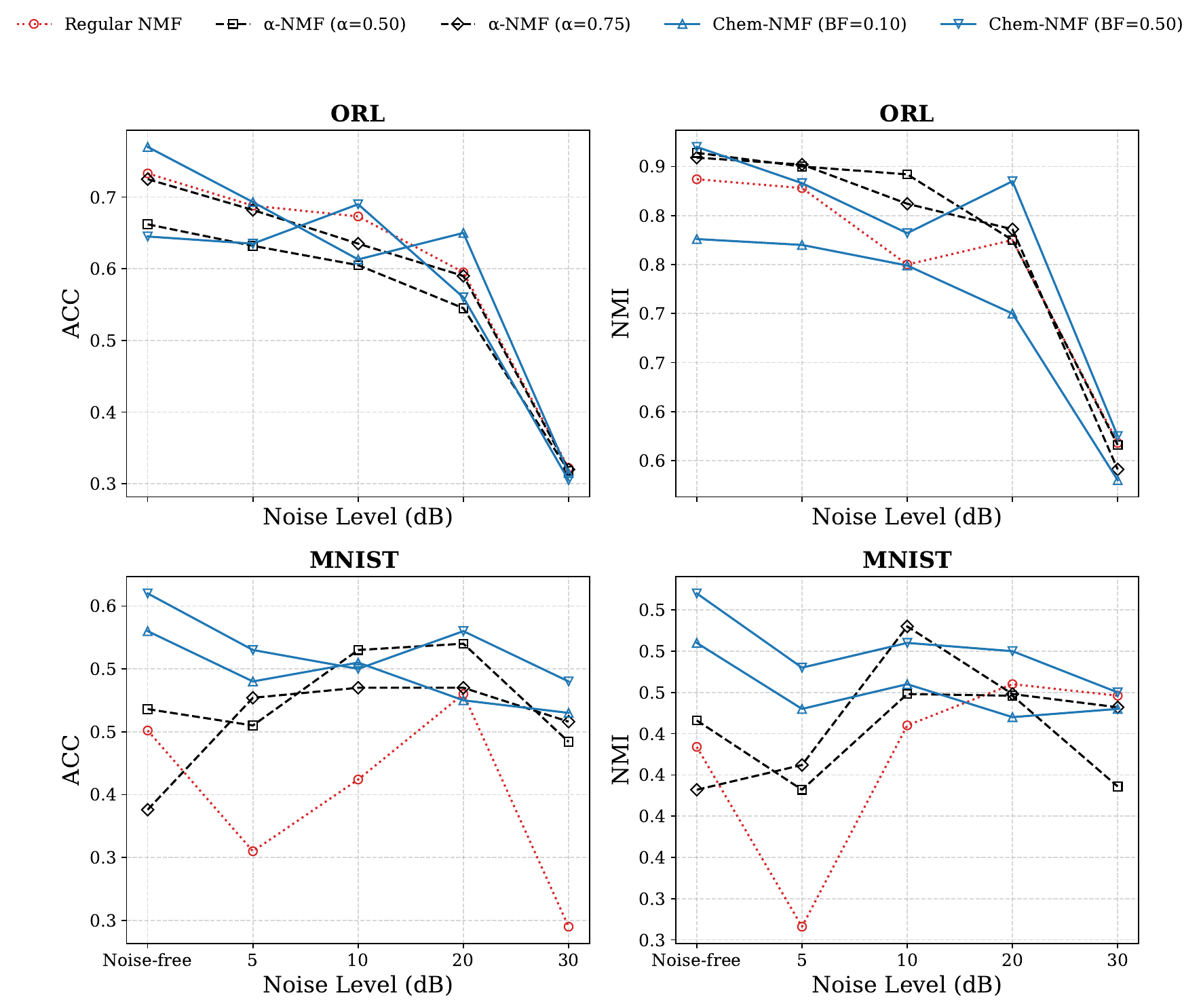}
    \captionsetup{font=footnotesize}
    \caption{Clustering performance of Regular NMF, $\alpha$-NMF, and Chem-NMF on ORL and MNIST datasets under different Gaussian 
    noise levels.}
    \label{fig:noise}
\end{figure}

\subsection{Numerical Convergence Analysis}
Fig.~\ref{fig:layer} illustrates the normalized training loss for a multi-layer $\alpha$-NMF run under different bounding factors ($BF$). Within each layer, the loss decreases and then flattens as updates approach a stationary point. When $BF=0$, the behaviour is equivalent to plain $\alpha$-NMF with random initialization. This setting explores aggressively, but it also causes sharp overshoots at layer boundaries and can trap the algorithm in higher local minima. At the other extreme, $BF=1$ enforces strict continuity across layers, which heavily bounds both initialization and update steps. While this avoids overshoot, it prevents sufficient exploration, and the algorithm may get stuck in suboptimal basins. Intermediate values $BF$ $\in(0,1)$ strike a balance between exploration and exploitation. They reduce the energy gap between successive layer minima while still allowing enough freedom to escape shallow plateaus. This balance yields smoother convergence and consistently lower final losses.

\begin{figure}[H]
    \centering
    \includegraphics[width=\textwidth]{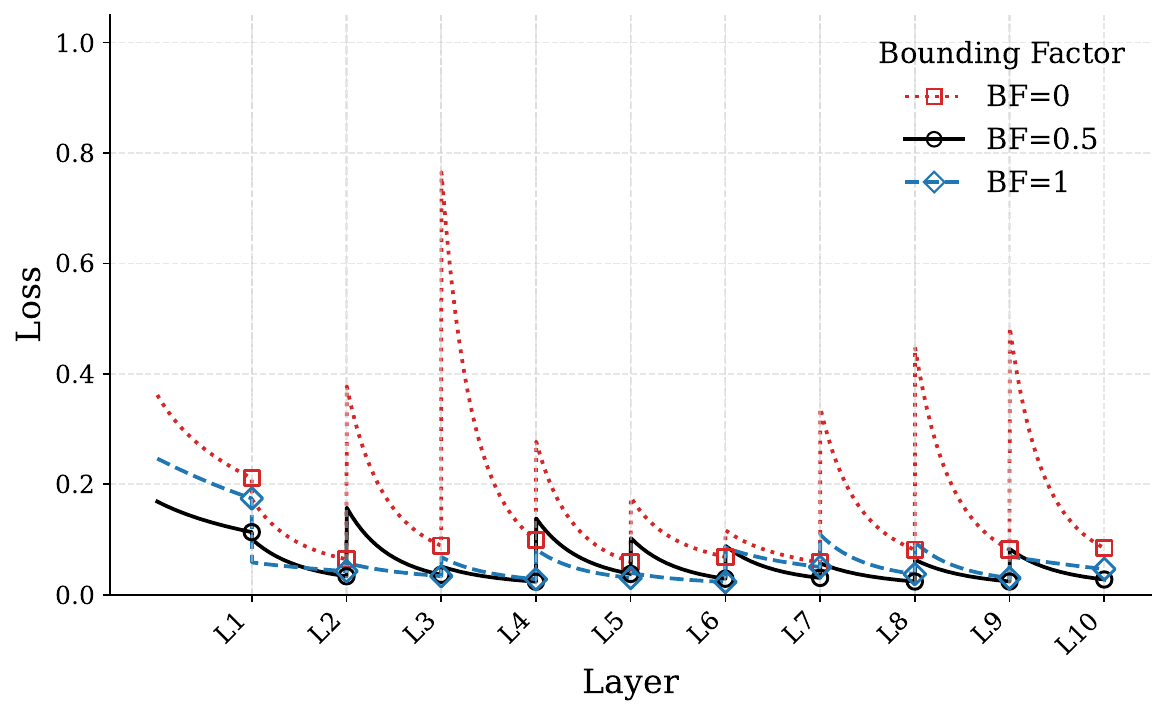}
    \captionsetup{font=footnotesize}
    \caption{Loss per iteration across layers for Chem-NMF with different bounding factors. 
    Markers denote the final $\alpha$-divergence value attained at the end of each layer, 
    representing the local optimum reached before re-initialization in the next layer.}
    \label{fig:layer}
\end{figure}

\subsection{Clinical Application: Clustering Cardiovascular Sounds}

We evaluate the utility of Chem-NMF in clinical applications by performing unsupervised clustering on lung and heart sound datasets. We transform the recordings into time–frequency spectrograms, factorize the data into a low-rank representation, and cluster data using K-means, Gaussian mixture models (GMM), agglomerative clustering, and spectral clustering. For the ablation study, we compare clustering performance without and with NMF feature extraction (Figure~\ref{fig:ablation}). The results demonstrate that NMF improves clustering performance.

\begin{figure}[H]
    \centering
    \includegraphics[width=\textwidth]{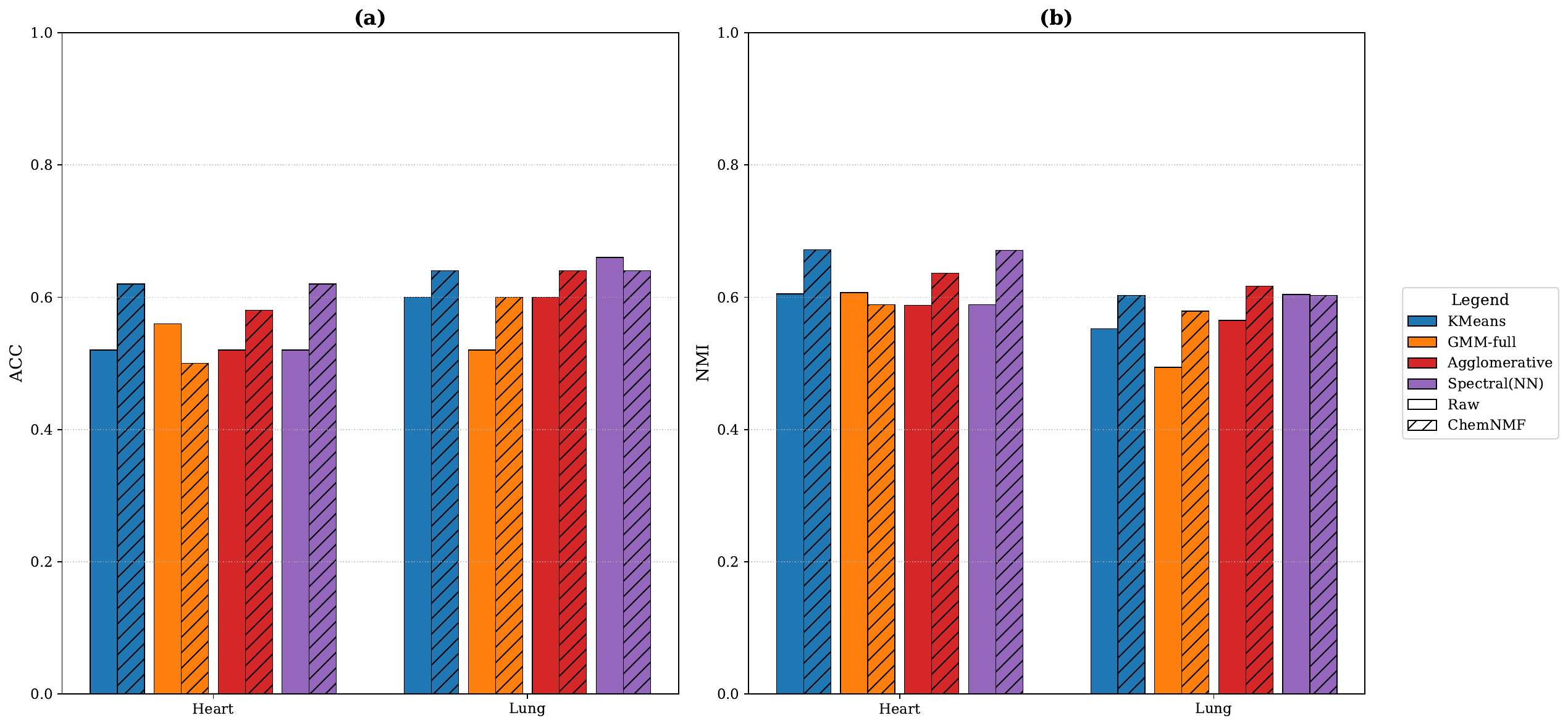}
    \captionsetup{font=footnotesize}
    \caption{Ablation study on the effect of Chem-NMF feature extraction on clustering performance of cardiovascular sounds based on: \textbf{(a)} ACC and \textbf{(b)} NMI measures.}
    \label{fig:ablation}
\end{figure}

\subsection{Comparison Performance}

Table~\ref{tab:compare} compares Chem-NMF with several recent NMF variants on ORL dataset. Across the evaluated datasets, Chem-NMF reaches an accuracy of 78\%, which represents 
an average improvement of 11\% $\pm$ 7\% over recent baselines. This indicates that while existing models contribute important advances, 
the chemical reaction–inspired formulation provides additional gains in clustering performance.

\begin{table}[H]
\centering
\footnotesize
\captionsetup{font=footnotesize}
\caption{Image Recognition performance of NMF algorithms on ORL dataset}
\begin{tabular}{p{0.15\linewidth} p{0.15\linewidth} p{0.15\linewidth} p{0.45\linewidth}}
\toprule
\textbf{[Ref]} & \textbf{Method} & \textbf{Accuracy (\%)} & \textbf{Description} \\
\midrule
This work & \textbf{Chem-NMF} & 78 & Chemical reaction-inspired \\
\midrule
\cite{Wan2026} & RLNMF-SP & 76 & Robust locality-regularized \\
\cite{9353266} & DR-NMF & 59 & Distributionally robust multi-objective \\
\cite{Barkhoda2026} & iDRNMF & 60 & Instance-wise distributionally robust \\
\cite{Salahian2023} & DAN-NMF & 58 & Deep autoencoder \\
\cite{cai2010graph} & GNMF & 70 & Graph-regularized \\
\cite{lu2017learning} & LRNF & 71 & Low-rank \\
\cite{he2020low} & LNMFS & 72 & Low-rank NMF on a Stiefel manifold \\
\cite{guo2021double} & DMR-NMF & 74 & Double manifolds regularized \\
\bottomrule
\end{tabular}
\label{tab:compare}
\end{table}

\newpage

\section{Discussion}

The findings of this work highlight the advantages of analyzing multi-layer $\alpha$-divergence NMF through an energy-based perspective. By introducing Chem-NMF with a bounding factor, we demonstrated that convergence can be stabilized while escaping poor local minima. This supports the theoretical analysis showing that multi-layer architectures reduce the probability of becoming trapped in suboptimal basins, though at the expense of slower convergence. The bounding factor plays a role analogous to a chemical catalyst. It regulates the initialization across layers, which leads to lowering the effective activation barrier, and balancing exploration and exploitation during optimization. Experimental evaluations on both image and biomedical audio datasets confirmed these theoretical analyses. The chemical analogy provides a useful framework for interpreting these results. Just as reactants traverse sequential activation barriers to reach stable products, Chem-NMF progresses across layers that gradually reduce divergence and improve stability. The connection between thermodynamic principles and optimization dynamics offers an intuitive and rigorous foundation for designing more reliable NMF algorithms. Nonetheless, limitations remain. The datasets employed may not fully reflect real-world variability. Additional validation on larger and more heterogeneous datasets is needed to assess scalability and clinical applicability. Furthermore, the multi-layer structure increases computational cost, motivating future work on adaptive strategies that adjust the bounding factor or depth dynamically. Finally, extending the theoretical framework to stochastic thermodynamics or quantum-inspired models could broaden the NMF application beyond clustering.

\section{Conclusion}

In this paper, we introduced Chem-NMF, a multi-layer $\alpha$-divergence NMF framework inspired by energy barriers in chemical reactions. By incorporating a bounding factor analogous to a chemical catalyst, the method stabilizes convergence, reduces overshoot, and improves clustering performance compared to Regular NMF and plain $\alpha$-NMF. Theoretical analysis confirmed a lower probability of staying in local minima, while experiments on image and biomedical datasets demonstrated clustering accuracy. These results establish Chem-NMF as a promising extension of NMF with practical potential across diverse applications.

\section*{Dataset Availability and Source Codes}
The Python scripts are available at \url{https://github.com/Torabiy/ChemNMF}. The dataset is available at \url{https://github.com/Torabiy/HLS-CMDS}.

\pagebreak
\begingroup
\setlength\bibitemsep{1pt} 
\renewcommand{\bibfont}{\fontsize{8pt}{11pt}\selectfont}  
\printbibliography
\endgroup

\pagebreak
\section*{Appendix}
\renewcommand{\thetable}{G\arabic{table}}
\setcounter{table}{0} 
\renewcommand{\thesubsection}{\Alph{subsection}} 

\subsection{Theorem 3.1}
\begin{proof}

We differentiate the cost function $D_{\alpha}(\mathbf{Y} \parallel \mathbf{A}\mathbf{X})$ with respect to $x_{jt}$:

\begin{equation}
\frac{\partial D}{\partial x_{jt}} = \frac{1}{\alpha} \sum_{i} a_{ij} \left[ 1 - \left( \frac{y_{it}}{[\mathbf{A}\mathbf{X}]_{it}} \right)^{\alpha} \right].
\end{equation}

To derive a multiplicative update rule, we employ a projected (transformed) gradient descent approach:

\begin{equation}
\Phi(x_{jt}) \leftarrow \Phi(x_{jt}) - \eta_{jt} \frac{\partial D}{\partial \Phi(x_{jt})},
\label{eq:grad}
\end{equation}

where we define the transformation function $\Phi(x) = x^{\alpha}$, and choose the learning rate as:

\begin{equation}
\eta_{jt} = \frac{\alpha^2 \Phi(x_{jt})}{x_{jt}^{1-\alpha} \sum_{i} a_{ij}}.
\label{eq:eta}
\end{equation}

Applying this transformation and using the chain rule, we obtain:

\begin{equation}
\frac{\partial D}{\partial \Phi(x_{jt})} 
= \frac{\partial D}{\partial x_{jt}} \cdot \frac{\partial x_{jt}}{\partial \Phi(x_{jt})} 
= \frac{1}{\alpha} \sum_{i} a_{ij} \left[ 1 - \left( \frac{y_{it}}{[\mathbf{A}\mathbf{X}]_{it}} \right)^{\alpha} \right] \cdot \frac{1}{\alpha x_{jt}^{\alpha-1}}.
\label{eq:devgrad}
\end{equation}

Since $\Phi(x_{jt}) = x_{jt}^{\alpha}$, we substitute ~(\ref{eq:eta}) and (\ref{eq:devgrad}) into ~(\ref{eq:grad}), yielding:
\allowdisplaybreaks
\begin{align}
x_{jt} & \leftarrow \Phi^{-1} \left( \Phi(x_{jt}) - \eta_{jt} \frac{\partial D}{\partial \Phi(x_{jt})} \right) \nonumber \\
&\leftarrow \left( x_{jt}^{\alpha} - \frac{\alpha^2 x_{jt}^{\alpha}}{x_{jt}^{1-\alpha} \sum_{i} a_{ij}} \cdot \frac{\partial D}{\partial \Phi(x_{jt})} \right)^{1/\alpha} \nonumber \\
&\leftarrow \left( x_{jt}^{\alpha} - \frac{\alpha^2 x_{jt}^{\alpha}}{x_{jt}^{1-\alpha} \sum_{i} a_{ij}} \cdot \frac{1}{\alpha} \sum_{i} a_{ij} \left[ 1 - \left( \frac{y_{it}}{[\mathbf{A}\mathbf{X}]_{it}} \right)^{\alpha} \right] \cdot \frac{1}{\alpha x_{jt}^{\alpha-1}} \right)^{1/\alpha} \nonumber \\
&\leftarrow \Bigg( x_{jt}^{\alpha} - \frac{\alpha^2 x_{jt}^{\alpha}}{x_{jt}^{1-\alpha} \sum_{i=1}^{I} a_{ij}} \cdot \frac{1}{\alpha} \sum_{i} a_{ij} \cdot \frac{1}{\alpha x_{jt}^{\alpha-1}} \nonumber \\ 
&\quad + \frac{\alpha^2 x_{jt}^{\alpha}}{x_{jt}^{1-\alpha} \sum_{i} a_{ij}} \cdot \frac{1}{\alpha} \sum_{i} a_{ij} \left( \frac{y_{it}}{[\mathbf{A}\mathbf{X}]_{it}} \right)^{\alpha} \cdot \frac{1}{\alpha x_{jt}^{\alpha-1}} \Bigg)^{1/\alpha} \nonumber \\ 
&\leftarrow \left( x_{jt}^{\alpha} - x_{jt}^{\alpha} + \frac{x_{jt}^{\alpha}}{\sum_{i=1}^{I} a_{ij}} \cdot \sum_{i} a_{ij} \left( \frac{y_{it}}{[\mathbf{A}\mathbf{X}]_{it}} \right)^{\alpha} \right)^{1/\alpha} \nonumber \\
&\leftarrow \left( \frac{x_{jt}^{\alpha}}{\sum_{i} a_{ij}} \cdot \sum_{i} a_{ij} \left( \frac{y_{it}}{[\mathbf{A}\mathbf{X}]_{it}} \right)^{\alpha} \right)^{1/\alpha}. \nonumber
\end{align}

\begin{equation}
    x_{jt} \leftarrow x_{jt} \left( \frac{\sum\limits_{i} a_{ij} \left( \frac{y_{it}}{[\mathbf{A}\mathbf{X}]_{it}} \right)^{\alpha}}{\sum\limits_{i} a_{ij}} \right)^{\frac{1}{\alpha}}.
\end{equation}

Similarly, we derive the update rule for $a_{ij}$ as:  

\begin{equation}
a_{ij} \leftarrow a_{ij} \left( \frac{\sum\limits_{t=1}^{T} x_{jt} \left( \frac{y_{it}}{[\mathbf{A}\mathbf{X}]_{it}} \right)^{\alpha}}{\sum\limits_{t=1}^{T} x_{jt}} \right)^{\frac{1}{\alpha}}. 
\end{equation}

\qedhere \text{ QED.}
\end{proof}

\subsection{Lemma 3.1}
\begin{proof} 
We have two conditions:

\textbf{(i) Identity Condition:} \textit{Setting $\mathbf{X}' = \mathbf{X}$ in the auxiliary function $G(\mathbf{X}, \mathbf{X}')$ recovers the original $F(\mathbf{X})$, such that  $G(\mathbf{X}, \mathbf{X}) = F(\mathbf{X})$.}

Setting $\mathbf{X}' = \mathbf{X}$, we simplify $\zeta_{itj}$:
\[
\zeta_{itj} = \frac{a_{ij} x_{jt}}{\sum_{j=1}^{J} a_{ij} x_{jt}} 
= \frac{a_{ij} x_{jt}}{[\mathbf{A}\mathbf{X}]_{it}}.
\]

Substituting $\zeta_{itj}$ into $G(\mathbf{X}, \mathbf{X})$ and simplifying, we get:

\begin{align}
G(\mathbf{X}, \mathbf{X}) 
&= \frac{1}{\alpha(\alpha-1)} \sum_{ijt} 
y_{it} \frac{a_{ij} x_{jt}}{[\mathbf{A}\mathbf{X}]_{it}} 
\left[ \left( \frac{[\mathbf{A}\mathbf{X}]_{it}}{y_{it}} \right)^{1-\alpha} 
+ (\alpha-1) \frac{[\mathbf{A}\mathbf{X}]_{it}}{y_{it}} - \alpha \right] \nonumber\\
&= \frac{1}{\alpha(\alpha-1)} \sum_{it} 
y_{it} \frac{\sum_{j=1}^{J} a_{ij} x_{jt}}{[\mathbf{A}\mathbf{X}]_{it}} 
\left[ \left( \frac{[\mathbf{A}\mathbf{X}]_{it}}{y_{it}} \right)^{1-\alpha} 
+ (\alpha-1) \frac{[\mathbf{A}\mathbf{X}]_{it}}{y_{it}} - \alpha \right] \nonumber\\
&= \frac{1}{\alpha(\alpha-1)} \sum_{it} 
y_{it} \frac{[\mathbf{A}\mathbf{X}]_{it}}{[\mathbf{A}\mathbf{X}]_{it}} 
\left[ \left( \frac{[\mathbf{A}\mathbf{X}]_{it}}{y_{it}} \right)^{1-\alpha} 
+ (\alpha-1) \frac{[\mathbf{A}\mathbf{X}]_{it}}{y_{it}} - \alpha \right] \nonumber\\
&= \frac{1}{\alpha(\alpha-1)} \sum_{it} 
\Big( [\mathbf{A}\mathbf{X}]_{it}^{1-\alpha} y_{it}^\alpha 
+ (\alpha-1) [\mathbf{A}\mathbf{X}]_{it}^\alpha - \alpha y_{it} \Big) 
= F(\mathbf{X}).
\end{align}

\qedhere \text{ QED.}

\vspace{1em}
\textbf{(ii) Upper Bound Condition:}  
\textit{The auxiliary function $G(\mathbf{X}, \mathbf{X}')$ provides an upper bound on $F(\mathbf{X})$, such that $G(\mathbf{X}, \mathbf{X}') \geq F(\mathbf{X})$.}

\begin{definition}[Jensen’s Inequality]
Let $f(z)$ be a convex function. Then, for any weights $w_j \geq 0$ such that $\sum_{j} w_j = 1$, we have:
\begin{equation}
    f\left(\sum_{j} w_j z_j\right) \leq \sum_{j} w_j f(z_j).
\end{equation}
\end{definition}

We consider the function associated with the $\alpha$-divergence:
\begin{equation}
    f(z) = \frac{1}{\alpha(\alpha-1)} \left[ z^{1-\alpha} + (\alpha -1)z - \alpha \right].
\end{equation}

Its first derivative is:
\begin{equation}
    f'(z) = \frac{1}{\alpha(\alpha-1)} \left[ (1-\alpha)z^{-\alpha} + (\alpha-1) \right].
\end{equation}

Differentiating again, we obtain:
\begin{equation}
    f''(z) = \frac{1}{\alpha(\alpha-1)} \left[ -\alpha(1-\alpha)z^{-\alpha-1} \right].
\end{equation}

Rewriting this:
\begin{equation}
    f''(z) = z^{-\alpha-1}.
\end{equation}

Since $z^{-\alpha-1} \geq 0$ for $z > 0$, we conclude that $f(z)$ is convex.  
Now, applying Jensen’s inequality, we obtain:
\begin{equation}
    f\left( \sum_{j}\frac{a_{ij} x_{jt}}{y_{it}} \right) 
    \leq \sum_{j} \zeta_{itj} f\left(\frac{a_{ij} x_{jt}}{y_{it}\zeta_{itj}} \right),
\end{equation}
where the weights $\zeta_{itj}$ are defined as:
\begin{equation}
    \zeta_{itj} = \frac{a_{ij} x'_{jt}}{\sum_{j=1}^{J} a_{ij} x'_{jt}}, 
    \quad \sum_{j} \zeta_{itj} = 1, \quad \zeta_{itj} \geq 0.
\end{equation}

Multiplying both sides by $y_{it}$ and summing over all $i$ and $t$, we get:
\begin{equation}
   F(\mathbf{X}) = \sum_{it} y_{it} f\left( \sum_{j}\frac{a_{ij} x_{jt}}{y_{it}} \right) 
   \leq \sum_{itj} y_{it} \zeta_{itj} f\left(\frac{a_{ij} x_{jt}}{y_{it}\zeta_{itj}} \right) 
   = G(\mathbf{X}, \mathbf{X}').
\end{equation}
\qedhere $\square$ \text{ QED.}
\end{proof}

\subsection{Theorem 3.2}
\begin{proof} 

Consider the function associated with the $\alpha$-divergence:

\begin{equation}
    f(z) = \frac{1}{\alpha(\alpha-1)} \left[ z^{1-\alpha} + (\alpha -1)z - \alpha \right].
\end{equation}

Its first derivative is:

\begin{equation}
    f'(z) = \frac{1}{\alpha(\alpha-1)} \left[ (1-\alpha)z^{-\alpha} + (\alpha-1) \right].
    \label{eq:firstderivative}
\end{equation}

Rewriting $F(\mathbf{X})$ and $G(\mathbf{X},\mathbf{X}')$ as:

\begin{equation}
   F(\mathbf{X})= \sum_{it} y_{it} f\left( \sum_{j}\frac{ a_{ij} x_{jt} }{y_{it}}\right).
\end{equation}

\begin{equation}
   G(\mathbf{X},\mathbf{X}')=\sum_{itj} y_{it} \zeta_{itj} f\left(\frac{a_{ij} x_{jt}}{y_{it}\zeta_{itj}} \right).
\end{equation}

We minimize $G(\mathbf{X}, \mathbf{X}')$ by setting the gradient to zero:

\begin{equation}
    \frac{\partial G(\mathbf{X}, \mathbf{X}')}{\partial x_{jt}} = \sum_{i} y_{it} \zeta_{itj} f'\left(\frac{a_{ij} x_{jt}}{y_{it}\zeta_{itj}}\right) \cdot \frac{\partial}{\partial x_{jt}} \left( \frac{a_{ij} x_{jt}}{y_{it}\zeta_{itj}} \right)=0.
\end{equation}

Since \( \frac{\partial}{\partial x_{jt}} \left( \frac{a_{ij} x_{jt}}{y_{it}\zeta_{itj}} \right) = \frac{a_{ij}}{y_{it}\zeta_{itj}} \), we get:

\begin{equation}
    \frac{\partial G(\mathbf{X}, \mathbf{X}')}{\partial x_{jt}} = \sum_{i} y_{it} \zeta_{itj} f'\left(\frac{a_{ij} x_{jt}}{y_{it}\zeta_{itj}}\right) \frac{a_{ij}}{y_{it}\zeta_{itj}}=0.
\end{equation}

Substituting \( f'(z) \) from ~(\ref{eq:firstderivative}) into the expression:

\begin{align}
    \frac{\partial G(\mathbf{X}, \mathbf{X}')}{\partial x_{jt}} = & \sum_{i} y_{it}\zeta_{itj} \cdot\frac{1}{\alpha(\alpha-1)} \left[ (1-\alpha) \left( \frac{a_{ij} x_{jt}}{y_{it}\zeta_{itj}} \right)^{-\alpha} + (\alpha-1) \right] \frac{a_{ij}}{y_{it}\zeta_{itj}} \nonumber \\
    & = \frac{1}{\alpha}\sum_{i}  a_{ij}\left[ 1- \left( \frac{a_{ij} x_{jt}}{y_{it}\zeta_{itj}} \right)^{-\alpha} \right] =0.
\end{align}

Rearranging the equation for $\alpha \neq 0$:

\begin{equation}
    \sum_{i} a_{ij} = \sum_{i} a_{ij} \left( \frac{a_{ij} x_{jt}}{y_{it} \zeta_{itj}} \right)^{-\alpha}= \sum_{i} a_{ij} \left( \frac{y_{it} \zeta_{itj}}{a_{ij} x_{jt}} \right)^{\alpha}.
\end{equation}

Dividing both sides by \( \sum_{i}a_{ij} \):

\begin{equation}
    1 = \frac{\sum_{i} a_{ij} \left( \frac{y_{it} \zeta_{itj}}{a_{ij} x_{jt}} \right)^{\alpha}}{\sum_{i} a_{ij}}.
\end{equation}

Substituting \( \zeta_{itj} \) from ~(\ref{eq:zeta}) into the expression:

\begin{equation}
    1 = \frac{\sum_{i} a_{ij} \left( \frac{y_{it}a_{ij}x'_{jt}}{a_{ij}x_{jt}\sum_{i} a_{ij} x'_{jt}} \right)^{\alpha}}{\sum_{i}a_{ij}} = \frac{\sum_{i} a_{ij} \left( \frac{y_{it}}{\sum_{i} a_{ij}x'_{jt}} \right)^{\alpha}\cdot \left( \frac{x'_{jt}}{x_{jt}} \right)^{\alpha}}{\sum_{i}a_{ij}},
\end{equation}

which leads to:

\begin{equation}
    \left(\frac{x_{jt}}{x'_{jt}}\right) = \left[ \frac{\sum\limits_{i} a_{ij} \left( \frac {y_{it}} {\sum\limits_{i} a_{ij}x'_{jt}}\right)^{\alpha}}{\sum\limits_{i} a_{ij}} \right]^{1/\alpha},
\end{equation}

which suggests the following update rule for \( x_{jt} \):

\begin{equation}
    x_{jt} \leftarrow x_{jt} \left( \frac{\sum\limits_{i} a_{ij} \left( \frac{y_{it}}{[\mathbf{A}\mathbf{X}]_{it}} \right)^{\alpha}}{\sum\limits_{i} a_{ij}} \right)^{\frac{1}{\alpha}}.
\end{equation}

Since \( G(\mathbf{X}, \mathbf{X}') \) is an auxiliary function for \( F(\mathbf{X}) \), minimizing \( G(\mathbf{X}, \mathbf{X}') \) at each step ensures that \( F(\mathbf{X}) \) is non-increasing: 

\begin{equation}  
    F(\mathbf{X}^{(t+1)}) \leq G(\mathbf{X}^{(t+1)}, \mathbf{X}^{(t)}) \leq G(\mathbf{X}^{(t)}, \mathbf{X}^{(t)}) = F(\mathbf{X}^{(t)}).
\end{equation}  

\qedhere \text{ QED.}

\end{proof}

\subsection{Lemma 4.1}
\begin{proof}
Each layer solves the following minimization problem:

\begin{equation}
    (\mathbf{A}^{(l)}, \mathbf{X}^{(l)}) = \arg \min_{\mathbf{A}, \mathbf{X}} D_{\alpha}(\mathbf{X}^{(l-1)} \parallel \mathbf{A}\mathbf{X}).
\end{equation}

This ensures:

\begin{equation}
    D_{\alpha}(\mathbf{X}^{(l-1)} \parallel \mathbf{A}^{(l)} \mathbf{X}^{(l)}) \leq D_{\alpha}(\mathbf{X}^{(l-1)} \parallel \mathbf{A}^{(l-1)} \mathbf{X}^{(l-1)}).
    \label{eq:this}
\end{equation}

Applying the non-increasing property in ~(\ref{eq:Gfunc}) to two consecutive layers, we obtain:

\begin{equation}
     D_{\alpha}(\mathbf{X}^{(l-1)} \parallel \mathbf{A}^{(l-1)} \mathbf{X}^{(l-1)})\leq D_{\alpha}(\mathbf{X}^{(l-2)} \parallel \mathbf{A}^{(l-1)} \mathbf{X}^{(l-1)}).
     \label{eq:that}
\end{equation}

Combining ~(\ref{eq:this}) and (\ref{eq:that}), we derive:

\begin{equation}
    D_{\alpha}(\mathbf{X}^{(l-1)} \parallel \mathbf{A}^{(l)} \mathbf{X}^{(l)}) \leq D_{\alpha}(\mathbf{X}^{(l-1)} \parallel \mathbf{A}^{(l-1)} \mathbf{X}^{(l-1)})\leq D_{\alpha}(\mathbf{X}^{(l-2)} \parallel \mathbf{A}^{(l-1)} \mathbf{X}^{(l-1)}).
\end{equation}

Thus, by definition,

\begin{equation}
    D_l \leq D_{l-1}.
\end{equation}
\qedhere \text{ QED.}
\end{proof}

\subsection{Lemma 4.2}
\begin{proof}
Since $D_l, M_l \ge 0$ are non-increasing and lower bounded, the sequences $M_l$ and $D_l$ converge to finite limits $M_\infty$ and $D_\infty$, respectively.
\begin{equation}
\lim_{l\to\infty} M_l = M_\infty,\qquad \lim_{l\to\infty} D_l = D_\infty.
\end{equation}
For any $\varepsilon>0,\;\exists\,N_M,N_D\in\mathbb{N}$ such that:
\begin{equation}
l\ge N_M \Rightarrow |M_l-M_\infty|<\tfrac{\varepsilon}{2},\qquad l\ge N_D \Rightarrow |D_l-D_\infty|<\tfrac{\varepsilon}{2}.
\end{equation}
Let $N_\xi=\max\{N_M,\;N_D\}$. Then for all $l\ge N_\xi$:
\begin{equation}
\begin {aligned}
&|\xi_l-(M_\infty-D_\infty)|
=|(M_l-M_\infty)-(D_{l-1}-D_\infty)| \\
&\le |M_l-M_\infty|+|D_{l-1}-D_\infty|
<\varepsilon.
\end {aligned}
\end{equation}
Which implies:
\begin{equation}
\lim_{l\to\infty}\xi_l = M_\infty - D_\infty. 
\end{equation}
By continuity property of Eq.~(12) we have:
\begin{equation}
\lim_{l\to\infty} P_l
=\lim_{l\to\infty}\tfrac{1}{Z}e^{-\beta\xi_l}
=\tfrac{1}{Z}e^{-\beta\lim_{l\to\infty}\xi_l}
=\tfrac{1}{Z}e^{-\beta(M_\infty-D_\infty)}
=P_\infty.
\end{equation}
\qedhere \text{ QED.}
\end{proof}

\subsection{Lemma 4.3}
\begin{proof}
At layer $l$, the escape probability is $P_l$, defined by Eq.~(12). This implies:
\begin{equation}
\mathbb{P}(\text{no escape at layer }l \mid S_{l-1}) = 1 - P_l.
\end{equation}
For $l=1$,
\begin{equation}
\mathbb{P}(S_1)= 1- P_l \vert_{l=1}=1-P_1.
\end{equation}
The law of conditional probability states:
\begin{equation}
\mathbb{P}(S_l)=\mathbb{P}(S_{l-1})\cdot \mathbb{P}(\text{no escape at layer }l \mid S_{l-1})=\mathbb{P}(S_{l-1})\cdot(1 - P_l).
\end{equation}
Assume for some $m \ge 1$:
\begin{equation}
\mathbb{P}(S_m)=\prod_{j=1}^{m}(1-P_j).
\end{equation}
Thus:
\begin{equation}
\mathbb{P}(S_{m+1})
=\mathbb{P}(S_m)(1-P_{m+1})
=\Big(\prod_{j=1}^{m}(1-P_j)\Big)(1-P_{m+1})
=\prod_{j=1}^{m+1}(1-P_j).
\end{equation}
By induction, the claim holds for all $n \ge 1$. Therefore:
\begin{equation}
   \mathbb{P}(S_n)=\prod_{l=1}^{n}(1-P_l). 
\end{equation}
\qedhere \text{ QED.}
\end{proof}

\subsection{Tables}

\begin{table}[H]
\centering
\footnotesize
\captionsetup{font=footnotesize}
\caption{Clustering performance on the ORL Dataset under different noise levels.}
\label{tab:acc-nmi}
\begin{tabular}{llllccccc}
\toprule
\textbf{Metric} & \textbf{Method} & \boldmath$BF$ & \multicolumn{1}{c}{\boldmath$\alpha$}
 & \textbf{Noise-free} & \textbf{5 dB} & \textbf{10 dB} & \textbf{20 dB} & \textbf{30 dB} \\
\toprule
\multirow{25}{*}{\rotatebox{90}{ACC}}
 & Regular NMF &  &            & 0.733 & 0.688 & 0.673 & 0.595 & 0.322 \\
 \cmidrule(lr){2-9}
 & \multirow{5}{*}{\(\alpha\)-NMF} &
   & 0.01 & 0.160 & 0.158 & 0.159 & 0.161 & 0.157 \\
 &  &              & 0.25 & 0.585 & 0.287 & 0.560 & 0.468 & 0.545 \\
 &  &              & 0.50 & 0.662 & 0.632 & 0.605 & 0.545 & 0.318 \\
 &  &              & 0.75 & 0.720 & 0.682 & 0.682 & 0.570 & 0.352 \\
 &  &              & 0.99 & 0.642 & 0.642 & 0.585 & 0.460 & 0.263 \\
 \cmidrule(lr){2-9}
 & \multirow{15}{*}{Chem-NMF} 
   & \multirow{5}{*}{0.01}
   & 0.01 & 0.158 & 0.157 & 0.160 & 0.159 & 0.156 \\
 & &                   & 0.25 & 0.160 & 0.159 & 0.162 & 0.158 & 0.157 \\
 & &                   & 0.50 & 0.161 & 0.160 & 0.163 & 0.161 & 0.158 \\
 & &                   & 0.75 & 0.162 & 0.161 & 0.164 & 0.162 & 0.159 \\
 & &                   & 0.99 & 0.340 & 0.312 & 0.287 & 0.233 & 0.177 \\
 \cmidrule(lr){3-9}
 &  & \multirow{5}{*}{0.10}
   & 0.01 & 0.160 & 0.159 & 0.161 & 0.158 & 0.157 \\
 &  &                   & 0.25 & 0.460 & 0.430 & 0.420 & 0.347 & 0.233 \\
 &  &                   & 0.50 & 0.588 & 0.580 & 0.542 & 0.497 & 0.290 \\
 &  &                   & 0.75 & 0.618 & 0.595 & 0.620 & 0.547 & 0.345 \\
 &  &                   & 0.99 & 0.642 & 0.588 & 0.583 & 0.440 & 0.263 \\
 \cmidrule(lr){3-9}
 &  & \multirow{5}{*}{0.50}
   & 0.01 & 0.162 & 0.160 & 0.161 & 0.159 & 0.158 \\
 &  &                   & 0.25 & 0.593 & 0.532 & 0.547 & 0.455 & 0.302 \\
 &  &                   & 0.50 & \textbf{0.778} & \textbf{0.740} & 0.700 & \textbf{0.580} & 0.352 \\
 &  &                   & 0.75 & 0.667 & 0.672 & \textbf{0.713} & 0.590 & \textbf{0.357} \\
 &  &                   & 0.99 & 0.645 & 0.635 & 0.690 & 0.560 & 0.305 \\
\specialrule{1pt}{0pt}{0pt}
\addlinespace[0.5ex]
\multirow{25}{*}{\rotatebox{90}{NMI}}
 & Regular NMF &  &            & 0.837 & 0.828 & 0.750 & 0.775 & 0.568 \\
 \cmidrule(lr){2-9}
 & \multirow{5}{*}{\(\alpha\)-NMF} & 
   & 0.01 & 0.372 & 0.373 & 0.372 & 0.374 & 0.371 \\
 &  &              & 0.25 & 0.766 & 0.766 & 0.760 & 0.678 & 0.524 \\
 &  &              & 0.50 & 0.827 & \textbf{0.830} & 0.808 & 0.749 & 0.551 \\
 &  &              & 0.75 & 0.867 & 0.831 & \textbf{0.840} & 0.768 & 0.565 \\
 &  &              & 0.99 & 0.801 & 0.805 & 0.773 & 0.677 & 0.507 \\
 \cmidrule(lr){2-9}
 & \multirow{15}{*}{Chem-NMF} 
   & \multirow{5}{*}{0.01}
   & 0.01 & 0.372 & 0.371 & 0.373 & 0.374 & 0.370 \\
 & &                   & 0.25 & 0.373 & 0.372 & 0.374 & 0.373 & 0.371 \\
 & &                   & 0.50 & 0.374 & 0.373 & 0.375 & 0.374 & 0.372 \\
 & &                   & 0.75 & 0.375 & 0.374 & 0.376 & 0.375 & 0.373 \\
 & &                   & 0.99 & 0.566 & 0.549 & 0.526 & 0.464 & 0.409 \\
 \cmidrule(lr){3-9}
 &  & \multirow{5}{*}{0.10}
   & 0.01 & 0.372 & 0.372 & 0.372 & 0.372 & 0.372 \\
 &  &                   & 0.25 & 0.667 & 0.663 & 0.644 & 0.583 & 0.472 \\
 &  &                   & 0.50 & 0.776 & 0.770 & 0.749 & 0.700 & 0.530 \\
 &  &                   & 0.75 & 0.802 & 0.795 & 0.796 & 0.757 & 0.567 \\
 &  &                   & 0.99 & 0.794 & 0.780 & 0.768 & 0.650 & 0.509 \\
 \cmidrule(lr){3-9}
 &  & \multirow{5}{*}{0.50}
   & 0.01 & 0.372 & 0.367 & 0.376 & 0.371 & 0.366 \\
 &  &                   & 0.25 & 0.759 & 0.753 & 0.700 & 0.759 & 0.548 \\
 &  &                   & 0.50 & \textbf{0.870} & 0.833 & 0.782 & \textbf{0.835} & \textbf{0.575} \\
 &  &                   & 0.75 & 0.850 & 0.842 & 0.749 & 0.827 & 0.550 \\
 &  &                   & 0.99 & 0.821 & 0.844 & 0.782 & 0.831 & 0.548 \\
\bottomrule
\end{tabular}
\end{table}

\begin{table}[H]
\centering
\footnotesize
\captionsetup{font=footnotesize}
\caption{Clustering performance on the MNIST Dataset under different noise levels.}
\label{tab:mnist-acc-nmi}
\begin{tabular}{llllccccc}
\toprule
\textbf{Metric} & \textbf{Method} & \boldmath$BF$ & \multicolumn{1}{c}{\boldmath$\alpha$}
 & \textbf{Noise-free} & \textbf{5 dB} & \textbf{10 dB} & \textbf{20 dB} & \textbf{30 dB} \\
\toprule
\multirow{25}{*}{\rotatebox{90}{ACC}}
 & Regular NMF &  &            & 0.451 & 0.355 & 0.412 & 0.480 & 0.295 \\
 \cmidrule(lr){2-9}
 & \multirow{5}{*}{\(\alpha\)-NMF} & 
   & 0.01 & 0.228 & 0.192 & 0.186 & 0.180 & 0.198 \\
 &  &              & 0.25 & 0.520 & 0.502 & 0.513 & 0.463 & 0.473 \\
 &  &              & 0.50 & 0.468 & 0.455 & \textbf{0.515} & 0.520 & 0.442 \\
 &  &              & 0.75 & 0.388 & 0.477 & 0.485 & 0.485 & 0.458 \\
 &  &              & 0.99 & 0.430 & 0.442 & 0.442 & 0.505 & 0.475 \\
 \cmidrule(lr){2-9}
 & \multirow{15}{*}{Chem-NMF} 
   & \multirow{5}{*}{0.01}
   & 0.01 & 0.170 & 0.165 & 0.162 & 0.168 & 0.160 \\
 & &                   & 0.25 & 0.240 & 0.235 & 0.238 & 0.242 & 0.228 \\
 & &                   & 0.50 & 0.265 & 0.258 & 0.263 & 0.267 & 0.250 \\
 & &                   & 0.75 & 0.295 & 0.280 & 0.290 & 0.300 & 0.272 \\
 & &                   & 0.99 & 0.320 & 0.310 & 0.305 & 0.315 & 0.288 \\
 \cmidrule(lr){3-9}
 &  & \multirow{5}{*}{0.10}
   & 0.01 & 0.260 & 0.255 & 0.250 & 0.258 & 0.248 \\
 &  &                   & 0.25 & 0.410 & 0.395 & 0.405 & 0.400 & 0.375 \\
 &  &                   & 0.50 & 0.455 & 0.440 & 0.448 & 0.460 & 0.420 \\
 &  &                   & 0.75 & 0.470 & 0.455 & 0.462 & 0.470 & 0.430 \\
 &  &                   & 0.99 & 0.490 & 0.470 & 0.475 & 0.485 & 0.445 \\
 \cmidrule(lr){3-9}
 &  & \multirow{5}{*}{0.50}
   & 0.01 & 0.288 & 0.324 & 0.198 & 0.221 & 0.328 \\
 &  &                   & 0.25 & 0.530 & 0.490 & 0.505 & 0.475 & 0.465 \\
 &  &                   & 0.50 & \textbf{0.560} & \textbf{0.515} & 0.500 & \textbf{0.530} & \textbf{0.490} \\
 &  &                   & 0.75 & 0.490 & 0.495 & 0.500 & 0.505 & 0.470 \\
 &  &                   & 0.99 & 0.470 & 0.465 & 0.470 & 0.515 & 0.485 \\
\specialrule{1pt}{0pt}{0pt}
\addlinespace[0.5ex]
 \multirow{25}{*}{\rotatebox{90}{NMI}}& Regular NMF &  &            & 0.417 & 0.308 & 0.430 & 0.455 & 0.448 \\
 \cmidrule(lr){2-9}
 & \multirow{5}{*}{\(\alpha\)-NMF}
 & 
   & 0.01 & 0.020 & 0.012 & -0.001 & 0.002 & 0.010 \\
 &  &              & 0.25 & 0.464 & \textbf{0.475} & 0.449 & 0.429 & 0.437 \\
 &  &              & 0.50 & 0.433 & 0.391 & 0.449 & 0.448 & 0.393 \\
 &  &              & 0.75 & 0.391 & 0.406 & \textbf{0.490} & 0.449 & 0.441 \\
 &  &              & 0.99 & 0.374 & 0.370 & 0.389 & 0.432 & 0.410 \\
 \cmidrule(lr){2-9}
 & \multirow{15}{*}{Chem-NMF}
   & \multirow{5}{*}{0.01}
   & 0.01 & 0.110 & 0.105 & 0.102 & 0.109 & 0.100 \\
 & &                   & 0.25 & 0.180 & 0.175 & 0.170 & 0.182 & 0.165 \\
 & &                   & 0.50 & 0.210 & 0.205 & 0.200 & 0.215 & 0.190 \\
 & &                   & 0.75 & 0.245 & 0.230 & 0.238 & 0.250 & 0.220 \\
 & &                   & 0.99 & 0.280 & 0.270 & 0.265 & 0.278 & 0.245 \\
 \cmidrule(lr){3-9}
 &  & \multirow{5}{*}{0.10}
   & 0.01 & 0.220 & 0.215 & 0.210 & 0.218 & 0.208 \\
 &  &                   & 0.25 & 0.350 & 0.340 & 0.345 & 0.352 & 0.330 \\
 &  &                   & 0.50 & 0.420 & 0.405 & 0.415 & 0.425 & 0.390 \\
 &  &                   & 0.75 & 0.450 & 0.430 & 0.440 & 0.452 & 0.410 \\
 &  &                   & 0.99 & 0.465 & 0.445 & 0.455 & 0.460 & 0.425 \\
 \cmidrule(lr){3-9}
 &  & \multirow{5}{*}{0.50}
   & 0.01 & 0.032 & 0.010 & 0.002 & 0.011 & -0.003 \\
 &  &                   & 0.25 & 0.480 & 0.440 & 0.455 & 0.435 & 0.440 \\
 &  &                   & 0.50 & \textbf{0.510} & 0.465 & 0.480 & \textbf{0.475} & \textbf{0.450} \\
 &  &                   & 0.75 & 0.495 & 0.455 & 0.465 & 0.470 & 0.445 \\
 &  &                   & 0.99 & 0.470 & 0.435 & 0.450 & 0.460 & 0.430 \\
\bottomrule
\end{tabular}
\end{table}

\newpage
\section*{Supplementary Material}

Catalysts play a crucial role in increasing the rate of chemical reactions by providing an alternative pathway with lower activation energy. We present an illustrative example of a catalytic action through the hydrogenation of alkenes, specifically the conversion of ethene to ethane:

\begin{equation}
\mathrm{C_2H_{4(g)}+ H_{2 (g)} \xrightarrow{Ni _{(s)}} C_2H_{6 (g)}}
\tag {S1}
\end{equation}

Alkenes contain a carbon–carbon double bond, which is relatively weak and highly reactive toward addition reactions. In a hydrogenation reaction, hydrogen atoms add across the double bond, yielding a saturated alkane. Although this process is thermodynamically favourable, it does not occur without a catalyst due to the high activation energy barrier. The catalyst enables the reaction by lowering this barrier [1]. In heterogeneous catalysis, the catalyst exists in a different phase from the reactants, typically a solid metal surface with gaseous reactants. The reaction proceeds via the adsorption of hydrogen and alkene molecules on the catalyst surface, followed by bond dissociation and the subsequent addition of hydrogen to the double bond. Ultimately, the product molecules separate from the catalyst surface, a process known as desorption (Fig. \ref{fig:chem1}). Common industrial catalysts include nickel, palladium, and platinum. Because the catalyst is not consumed in the process, it can be reused multiple times. Hydrogenation is widely applied in the chemical and food industries. For instance, in converting unsaturated oils into semi-solid fats such as margarine, thereby improving product stability and shelf life [2].

\renewcommand{\thefigure}{S\arabic{figure}}
\setcounter{figure}{0}
\begin{figure}[H]
    \centering
    \includegraphics[width=\textwidth]{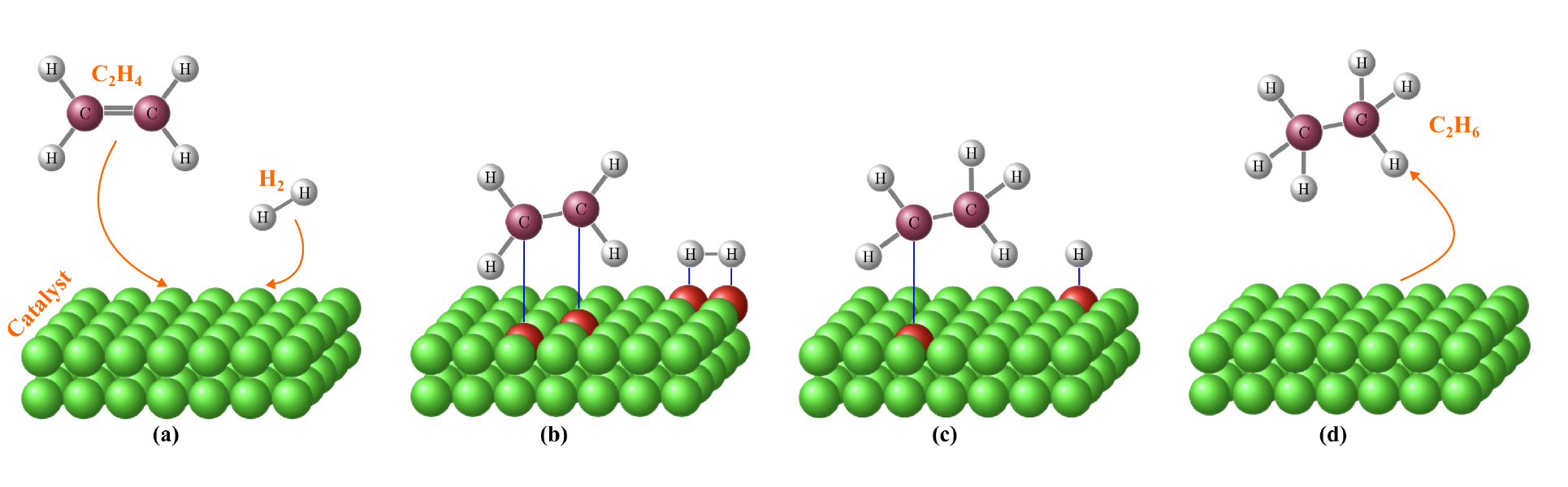}
    \captionsetup{font=footnotesize}
    \caption{%
Schematic of the catalytic hydrogenation of ethene to ethane over a heterogeneous nickel catalyst. 
\textbf{(a)} Adsorption of the reactant molecules onto the metal surface; 
\textbf{(b)} Dissociation of the {H--H} and {C=C} bonds; 
\textbf{(c)} Migration and addition of hydrogen atoms to the carbon atoms; \textbf{(d)} Desorption of the ethane product from the catalyst surface.}
\label{fig:chem1}
\end{figure}

\vspace{2mm}
\noindent\textbf{References}
\footnotesize
\begin{enumerate}
    \item ``Catalytic Hydrogenation,'' \emph{LibreTexts Chemistry}, 2025. [Online]. \\ \url{https://chem.libretexts.org/Bookshelves/Organic_Chemistry}
    \item ``Applications of Heterogeneous Catalysis in Industry,'' \emph{Cademix Institute of Technology}, 2025. [Online]. \\ \url{https://www.cademix.org/applications-of-heterogeneous-catalysis-in-industry/}
\end{enumerate}

\end{document}